\newcommand{\vX}{\mathbf{X}}
\newcommand{\vx}{\mathbf{x}}
\newcommand{\expectation}{\mathbb{E}}
\newcommand{\contribution}{{\phi}}
\newcommand{\val}{{v}}
\newcommand{\dodo}{\mathit{do}}
\newcommand{\ldo}[1]{\dodo(X_{#1} = x_{#1})}
\newcommand{\lvdo}[1]{\dodo(\vX_{#1} = \vx_{#1})}
\newcommand{\pa}{\mathop{\textit{pa}}}
\newcommand{\spa}{\mathop{\textit{\scriptsize pa}}}
\newcommand{\perm}{\pi}
\newcommand{\allfeatures}{{N}}
\newcommand{\onder}[2]{{#1}_{\mbox{\scriptsize #2}}}
\newcommand{\isequal}{\hspace*{-2.5mm} & = & \hspace*{-2.5mm}}
\newcommand{\chaincomponents}{{\cal T}}
\newcommand{\isequaldo}[1]{\hspace*{-2.5mm} & \overset{(#1)}{=} & \hspace*{-2.5mm}}
\newcommand{\Spre}{\underline{S}}
\newcommand{\Spost}{\bar{S}}
\newtheorem{theorem}{Theorem}
\title{Causal Shapley Values: Exploiting Causal Knowledge to Explain Individual Predictions of Complex Models} 
\author{%
  Tom Heskes\\
  Radboud University\\
  \texttt{tom.heskes@ru.nl}
  \And
  Evi Sijben\\
  Machine2Learn\\
  \texttt{evisijben@gmail.com}
  \AND
  Ioan Gabriel Bucur \\
  Radboud University \\
  \texttt{g.bucur@cs.ru.nl}
  \And
  Tom Claassen\\
  Radboud University \\
  \texttt{t.claassen@science.ru.nl}
}
\begin{document}

\maketitle

\begin{abstract}
Shapley values underlie one of the most popular model-agnostic methods within explainable artificial intelligence. These values are designed to attribute the difference between a model's prediction and an average baseline to the different features used as input to the model. Being based on solid game-theoretic principles, Shapley values uniquely satisfy several desirable properties, which is why they are increasingly used to explain the predictions of possibly complex and highly non-linear machine learning models. Shapley values are well calibrated to a user’s intuition when features are independent, but may lead to undesirable, counterintuitive explanations when the independence assumption is violated.

In this paper, we propose a novel framework for computing Shapley values that generalizes recent work that aims to circumvent the independence assumption. By employing Pearl's \textit{do}-calculus, we show how these `causal' Shapley values can be derived for general causal graphs without sacrificing any of their desirable properties. Moreover, causal Shapley values enable us to separate the contribution of direct and indirect effects. We provide a practical implementation for computing causal Shapley values based on causal chain graphs when only partial information is available and illustrate their utility on a real-world example.
\end{abstract}

\section{Introduction}
Complex machine learning models like deep neural networks and ensemble methods like random forest and gradient boosting machines may well outperform simpler approaches such as linear regression or single decision trees, but are noticeably harder to interpret. This can raise practical, ethical, and legal issues, most notably when applied in critical systems, e.g., for medical diagnosis or autonomous driving. The field of explainable AI aims to address these issues by enhancing the interpretability of complex machine learning models.

The Shapley-value approach has quickly become one of the most popular model-agnostic methods within explainable AI. It can provide local explanations, attributing changes in predictions for individual data points to the model's features, that can be combined to obtain better global understanding of the model structure~\cite{lundberg2020local}. Shapley values are based on a principled mathematical foundation~\cite{shapley1953value} and satisfy various desiderata (see also Section~\ref{sec:interpretation}). They have been applied for explaining statistical and machine learning models for quite some time, see e.g.,~\cite{lipovetsky2001analysis,vstrumbelj2014explaining}. Recent interests have been triggered by Lundberg and Lee's breakthrough paper~\cite{lundberg2017unified} that introduces efficient computational procedures and unifies Shapley values and other popular local model-agnostic approaches such as LIME~\cite{ribeiro2016should}.

Humans have a strong tendency to reason about their environment in causal terms~\cite{sloman2005causal}, where explanation and causation are intimately related: explanations often appeal to causes, and causal claims often answer questions about why or how something occurred~\cite{lombrozo2017causal}. The specific domain of causal responsibility studies how people attribute an effect to one or more causes, all of which may have contributed to the observed effect~\cite{sober1988apportioning}. Causal attributions by humans strongly depend on a subject's understanding of the generative model that explains how different causes lead to the effect, for which the relations between these causes are essential~\cite{gerstenberg2012noisy}.

Most explanation methods, however, tend to ignore such relations and act as if features are independent. Even so-called counterfactual approaches, that strongly rely on a causal intuition, make this simplifying assumption (e.g.,~\cite{wachter2017counterfactual}) and ignore that, in the real world, a change in one input feature may cause a change in another. This independence assumption also underlies early Shapley-based approaches, such as~\cite{vstrumbelj2014explaining,datta2016algorithmic}, and is made explicit as an approximation for computational reasons in~\cite{lundberg2017unified}. We will refer to these as {\em marginal} Shapley values.

Aas et al.~\cite{aas2019explaining} argue and illustrate that marginal Shapley values may lead to incorrect explanations when features are highly correlated, motivating what we will refer to as {\em conditional} Shapley values. Janzing et al.~\cite{janzing2019feature}, following~\cite{datta2016algorithmic}, discuss a causal interpretation of Shapley values, in which they replace conventional conditioning by observation with conditioning by intervention, as in Pearl's {\em do}-calculus~\cite{pearl2012calculus}. They argue that, when the goal is to causally explain the prediction {\em algorithm}, the inputs of this algorithm can be formally distinguished from the features in the real world and `interventional' Shapley values simplify to marginal Shapley values. This argument is also picked up by~\cite{lundberg2020local} when implementing interventional Tree SHAP. Going in a different direction, Frye et al.~\cite{frye2019asymmetric} propose {\em asymmetric} Shapley values as a way to incorporate causal knowledge in the real world by restricting the possible permutations of the features when computing the Shapley values to those consistent with a (partial) causal ordering. In line with~\cite{aas2019explaining}, they then apply conventional conditioning by observation to make sure that the explanations respect the data manifold.

The main contributions of our paper are as follows.
(1)~We derive {\em causal} Shapley values that explain the total effect of features on the prediction, taking into account their causal relationships. This makes them principally different from marginal and conditional Shapley values. At the same time, compared to asymmetric Shapley values, causal Shapley values provide a more direct and robust way to incorporate causal knowledge. (2)~Our method allows for further insights into feature relevance by separating out the total causal effect into a direct and indirect contribution. (3)~Making use of causal chain graphs~\cite{lauritzen2002chain}, we propose a practical approach for computing causal Shapley values and illustrate this on a real-world example.

\section{Causal Shapley values}
\label{sec:interpretation}

In this section, we will introduce causal Shapley values and contrast them to other approaches. We assume that we are given a machine learning model $f(\cdot)$ that can generate predictions for any feature vector $\vx$. Our goal is to provide an explanation for an individual prediction $f(\vx)$, that takes into account the causal relationships between the features in the real world.

Attribution methods, with Shapley values as their most prominent example, provide a local explanation of individual predictions by attributing the difference between $f(\vx)$ and a baseline $f_0$ to the different features $i \in \allfeatures$ with $\allfeatures = \{1,\ldots,n\}$ and $n$ the number of features:
\begin{equation}
f(\vx) = f_0 + \sum_{i=1}^n \contribution_i \: ,
\label{eq:efficiency}
\end{equation}
where $\contribution_i$ is the contribution of feature $i$ to the prediction $f(\vx)$. For the baseline $f_0$ we will take the average prediction $f_0 = \expectation f(\vX)$ with expectation taken over the observed data distribution $P(\vX)$. Equation~(\ref{eq:efficiency}) is referred to as the {\em efficiency property}~\cite{shapley1953value}, which appears to be a sensible desideratum for any attribution method and we therefore take here as our starting point.

To go from knowing none of the feature values, as for $f_0$, to knowing all feature values, as for $f(\vx)$, we can add feature values one by one, actively setting the features to their values in a particular order $\perm$. We define the contribution of feature $i$ given permutation $\perm$ as the difference in value function before and after setting the feature to its value:
\begin{equation}
\contribution_i(\perm) = \val(\{j: j \preceq_\perm i\}) - \val(\{j: j \prec_\perm i\}) \: ,
\label{eq:contperm}
\end{equation}
with $j \prec_\perm i$ if $j$ precedes $i$ in the permutation $\perm$
and where we choose the value function
\begin{equation}
\val(S) = \expectation \left[f(\vX) | \lvdo{S} \right] = \int d\vX_{\bar{S}} \: P(\vX_{\bar{S}}|\lvdo{S}) f(\vX_{\bar{S}},\vx_S) \: .
\label{eq:valuedef}
\end{equation}
Here $S$ is the subset of `in-coalition' indices with known feature values $\vx_S$. To compute the expectation, we average over the `out-of-coalition' or dropped features $\vX_{\bar{S}}$ with $\bar{S} = \allfeatures \setminus S$, the complement of $S$. To explicitly take into account possible causal relationships between the `in-coalition' features and the `out-of-coalition' features, we propose to condition `by intervention' for which we resort to Pearl's \textit{do}-calculus~\cite{pearl1995causal}. In words, the contribution $\contribution_i(\perm)$ now measures the relevance of feature $i$ through the (average) prediction obtained if we actively set feature $i$ to its value $x_i$ compared to (the counterfactual situation of) not knowing its value.

Since the sum over features $i$ in~(\ref{eq:contperm}) is telescoping, the efficiency property~(\ref{eq:efficiency}) holds for any permutation $\perm$. Therefore, for any distribution over permutations $w(\perm)$ with $\sum_{\perm} w(\perm) = 1$, the contributions
\begin{equation}
\contribution_i = \sum_{\perm} w(\perm) \contribution_i(\perm)
\label{eq:shapperm}
\end{equation}
still satisfy~(\ref{eq:efficiency}). An obvious choice would be to take a uniform distribution $w(\perm) = 1/n!$. We then arrive at (with shorthand $i$ for the singleton $\{i\}$):
\[
\contribution_i = \sum_{S \subseteq \allfeatures\setminus i} \frac{|S|! (n-|S|-1)!}{n!} \left[\val(S \cup i) - \val(S) \right] \: .
\]
Besides efficiency, the Shapley values uniquely satisfy three other desirable properties~\cite{shapley1953value}.
\begin{description}
	\item[Linearity:] for two value functions $\val_1$ and $\val_2$, we have $\contribution_i(\alpha_1 \val_1 + \alpha_2 \val_2) = \alpha_1 \contribution_i(\val_1) + \alpha_2 \contribution_i(\val_2)$. This guarantees that the Shapley value of a linear ensemble of models is a linear combination of the individual models' Shapley values.
	\item[Null player (dummy):] if $\val(S \cup i) = \val(S)$ for all $S \subseteq \allfeatures \setminus i$, then $\contribution_i = 0$. A feature that never contributes to the prediction (directly nor indirectly, see below) receives zero Shapley value.
	\item[Symmetry:] if $\val(S \cup i) = \val(S \cup j)$ for all  $S \subseteq \allfeatures \setminus \{i,j\}$, then $\contribution_i = \contribution_j$. Symmetry in this sense holds for marginal, conditional, and causal Shapley values alike.
\end{description}
Note that symmetry here is defined w.r.t.\ to the contributions $\contribution_i$, not the function values $f(\vx)$. As discussed in~\cite{janzing2019feature} (Section 3), conditioning by observation or intervention then does not break the symmetry property. For a non-uniform distribution of permutations as in~\cite{frye2019asymmetric}, symmetry is lost, but efficiency, linearity, and null player still apply.

Replacing conditioning by intervention with conventional conditioning by observation, i.e., averaging over $P(\vX_{\bar{S}}|\vx_{S})$ instead of $P(\vX_{\bar{S}}|\lvdo{S})$ in~(\ref{eq:valuedef}), we arrive at the conditional Shapley values of~\cite{aas2019explaining,lundberg2018consistent}. A third option is to ignore the feature values $\vx_S$ and take the unconditional, marginal distribution $P(\vX_{\bar{S}})$, which leads to the marginal Shapley values.

Up until here, our active interventional interpretation of Shapley values coincides with that in~\cite{datta2016algorithmic,janzing2019feature,lundberg2020local}. However, from here on Janzing et al.~\cite{janzing2019feature} choose to ignore any dependencies between the features in the real world, by formally distinguishing between true features (corresponding to one of the data points) and the features plugged as input into the model. This leads to the conclusion that, in our notation, $P(\vX_{\bar{S}}|\lvdo{S}) = P(\vX_{\bar{S}})$ for any subset $S$. As a result, any expectation under conditioning by intervention collapses to a marginal expectation and, in the interpretation of~\cite{datta2016algorithmic,janzing2019feature,lundberg2020local}, interventional Shapley values simplify to marginal Shapley values. As we will see below, marginal Shapley values can only represent direct effects, which makes that `root causes' with strong indirect effects (e.g.\ genetic markers) are ignored in the attribution, which is quite different from how humans tend to attribute causes~\cite{sober1988apportioning}. In this paper, we choose not to make this distinction between the features in the real world and the inputs of the prediction model, but to explicitly take into account the causal relationships between the data in the real world to enhance the explanations. Since the term `interventional' Shapley values has been coined for causal explanations of the prediction algorithm, ignoring causal relationships between the features in the real world, we will use the term `causal' Shapley values for those that do attempt to incorporate these relationships using Pearl's {\em do}-calculus.

The asymmetric Shapley values introduced in~\cite{frye2019asymmetric} have the same objective: enhancing the explanation of the Shapley values by incorporating causal knowledge about the features in the real world. In ~\cite{frye2019asymmetric}, this knowledge is incorporated by choosing $w(\perm) \neq 0$ in~(\ref{eq:shapperm}) only for those permutations $\perm$ that are consistent with the causal structure between the features, i.e., are such that a known causal ancestor always precedes its descendants. On top of this, Frey et al.~\cite{frye2019asymmetric} apply standard conditioning by observation. In this paper we show that there is no need to resort to asymmetric Shapley values to incorporate causal knowledge: applying conditioning by intervention instead of conditioning by observation is sufficient. Choosing asymmetric Shapley values instead of symmetric ones can be considered orthogonal to choosing conditioning by observation versus conditioning by intervention. We will therefore refer to the approach of~\cite{frye2019asymmetric} as {\em asymmetric conditional} Shapley values, to contrast them with {\em asymmetric causal} Shapley values that implement both ideas.

\section{Decomposing Shapley values into direct and indirect effects}

Our causal interpretation allows us to distinguish between direct and indirect effects of each feature on a model's prediction. This decomposition then also helps to understand the difference between marginal, symmetric, and asymmetric Shapley values. Going back to the contribution $\contribution_i(\perm)$ for a permutation $\perm$ and feature $i$ in (\ref{eq:contperm}) and using shorthand notation $\Spre = \{j: j \prec_\perm i\}$ and $\Spost = \{j: j \succ_\perm i\}$, we can decompose the total effect for this permutation into a direct and an indirect effect:
\begin{eqnarray}
	\contribution_i(\perm) \isequal
	\expectation[f(\vX_{\Spost},\vx_{\Spre \cup i})|\lvdo{\Spre \cup i}] - \expectation[f(\vX_{\Spost \cup i},\vx_{\Spre})|\lvdo{\Spre}] ~~~~~~\mbox{(total effect)} \nonumber \\
	\isequal \expectation[f(\vX_{\Spost},\vx_{\Spre \cup i})|\lvdo{\Spre}] - \expectation[f(\vX_{\Spost \cup i},\vx_{\Spre})|\lvdo{\Spre}] + ~~~~~~~\mbox{(direct effect)} \nonumber \\
	&& \! \! \! \! \expectation[f(\vX_{\Spost},\vx_{\Spre \cup i})|\lvdo{\Spre \cup i}] - \expectation[f(\vX_{\Spost},\vx_{\Spre \cup i})|\lvdo{\Spre}] ~~~~\mbox{(indirect effect)} \nonumber \\ && ~~\label{eq:decomposition}
\end{eqnarray}
The direct effect measures the expected change in prediction when the stochastic feature $X_i$ is replaced by its feature value $x_i$, without changing the distribution of the other `out-of-coalition' features. The indirect effect measures the difference in expectation when the distribution of the other `out-of-coalition' features changes due to the additional intervention $\ldo{i}$. The direct and indirect parts of Shapley values can then be computed as in~(\ref{eq:shapperm}): by taking a, possibly weighted, average over all permutations. Conditional Shapley values can be decomposed similarly by replacing conditioning by intervention with conditioning by observation in~(\ref{eq:decomposition}). For marginal Shapley values, there is no conditioning and hence no indirect effect: by construction marginal Shapley values can only represent direct effects. We will make use of this decomposition in the next section to clarify how different causal structures lead to different Shapley values.

\section{Shapley values for different causal structures}
\label{sec:toymodels}

\newcommand{\patd}{{\em D}}
\newcommand{\pats}{{\em E}}
\newcommand{\pata}{{\em R}}
\newcommand{\stupid}[1]{\color{red}\underline{#1}}
\begin{figure}
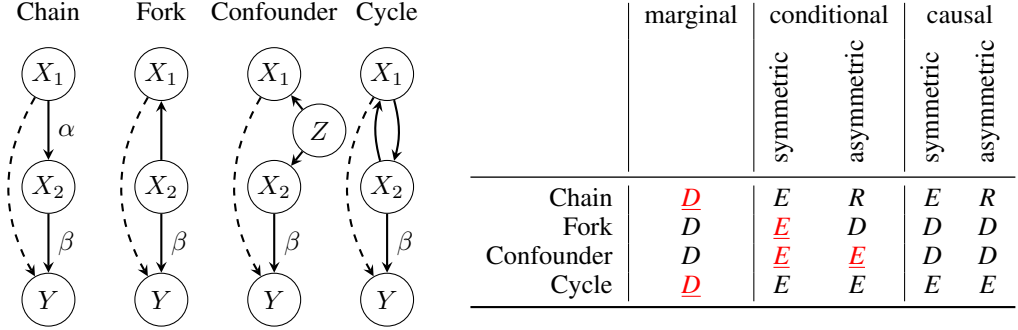

	\centering
	\begin{tabular}{c|cc|cc|cc}
		& \multicolumn{2}{c|}{\patd} & \multicolumn{2}{c|}{\pats} & \multicolumn{2}{c}{\pata} \\[0.3em] 
		& direct & indirect & direct & indirect & direct & indirect \\ \midrule
		$\phi_1$ & 0 & 0 & 0 & $\frac{1}{2} \beta \alpha x_1$ & 0 & $\beta \alpha x_1$ \\
		$\phi_2$ & $\beta x_2$ & 0 & $\beta x_2 - \frac{1}{2} \beta \alpha x_1$ & 0 & $\beta x_2 - \beta \alpha x_1$ & 0 \\ \bottomrule
	\end{tabular}\\
	\vspace*{2em}
	\begin{minipage}{0.45\textwidth}
		\centering
		\tikzstyle{arrow} = [thick,->,>=stealth]
		\tikzstyle{dashedarrow} = [thick,->,>=stealth,dashed]
		\tikz{
			\node[latent] at (0,0) (y1) {$Y$};
			\node[latent] at (0,1.5) (x12) {$X_2$};
			\node[latent] at (0,3) (x11) {$X_1$};
			\node[text height=1em, anchor=north] at (0,4.2) {Chain};
			\draw[arrow] (x12) -- node[anchor=west]{$\beta$} (y1);
			\draw[arrow] (x11) -- node[anchor=west]{$\alpha$} (x12);
			\draw[dashedarrow] (x11) to[bend right] (y1);
			\node[latent] at (1.5,0) (y2) {$Y$};
			\node[latent] at (1.5,1.5) (x22) {$X_2$};
			\node[latent] at (1.5,3) (x21) {$X_1$};
			\node[text height=1em, anchor=north] at (1.5,4.2) {Fork};
			\draw[arrow] (x22) -- node[anchor=west]{$\beta$} (y2);
			\draw[arrow] (x22) -- (x21);
			\draw[dashedarrow] (x21) to[bend right] (y2);
			\node[latent] at (3,0) (y3) {$Y$};
			\node[latent] at (3,1.5) (x32) {$X_2$};
			\node[latent] at (3,3) (x31) {$X_1$};
			\node[latent] at (3.6,2.25) (z)  {$Z$};
			\node[text height=1em, anchor=north] at (3,4.2) {Confounder};
			\draw[arrow] (x32) -- node[anchor=west]{$\beta$} (y3);
			\draw[arrow] (z) -- (x31);
			\draw[arrow] (z) -- (x32);
			\draw[dashedarrow] (x31) to[bend right] (y3);
			\node[latent] at (4.5,0) (y4) {$Y$};
			\node[latent] at (4.5,1.5) (x42) {$X_2$};
			\node[latent] at (4.5,3) (x41) {$X_1$};
			\node[text height=1em, anchor=north] at (4.5,4.2) {Cycle};
			\draw[arrow] (x42) -- node[anchor=west]{$\beta$} (y4);
			\draw[arrow] (x41) to[bend left=15] (x42);
			\draw[arrow] (x42) to[bend left=15] (x41);
			\draw[dashedarrow] (x41) to[bend right] (y4);			
		}
	\end{minipage}
	\hfill
	\begin{minipage}{0.53\textwidth}
		\begin{tabular}{r|c|cc|cc}
			& marginal & \multicolumn{2}{c|}{conditional} & \multicolumn{2}{c}{causal} \\[0.3em] 
			& & \rotatebox{90}{symmetric} & \rotatebox{90}{asymmetric} & \rotatebox{90}{symmetric} & \rotatebox{90}{asymmetric} \\ \midrule
			Chain		& \stupid{\patd} & \pats & \pata & \pats & \pata \\
			Fork		& \patd & \stupid{\pats} & \patd & \patd & \patd \\
			Confounder 	& \patd & \stupid{\pats} & \stupid{\pats} & \patd & \patd \\
			Cycle		& \stupid{\patd} & \pats & \pats & \pats & \pats \\
			\bottomrule
		\end{tabular}
	\end{minipage}
	\caption{Direct and indirect Shapley values for four causal models with the same observational distribution over features (such that $\expectation[X_1] = \expectation[X_2] = 0$ and $\expectation[X_2|x_1] = \alpha x_1$), yet a different causal structure. We assume a linear model that happens to ignore the first feature: $f(x_1,x_2) = \beta x_2$. The bottom table gives for each of the four causal models on the left the marginal, conditional, and causal Shapley values, where the latter two are further split up in symmetric and asymmetric. Each letter in the bottom table corresponds to one of the patterns of direct and indirect effects detailed in the top table: `direct' (\patd, only direct effects), `evenly split' (\pats, credit for an indirect effect split evenly between the features), and `root cause' (\pata, all credit for the indirect effect goes to the root cause). Shapley values that, we argue, do not provide a proper causal explanation, are underlined and indicated in red.}
	\label{fig:fourmodels}
\end{figure}

To illustrate the difference between the various Shapley values, we consider four causal models on two features. They are constructed such that they have the same $P(\vX)$, with $\expectation[X_2|x_1] = \alpha x_1$ and $\expectation[X_1] = \expectation[X_2] = 0$, but with different causal explanations for the dependency between $X_1$ and $X_2$. In the causal chain $X_1$ could, for example, represent season, $X_2$ temperature, and $Y$ bike rental. The fork inverts the arrow between $X_1$ and $X_2$, where now $Y$ may represent hotel occupation, $X_2$ season, and $X_1$ temperature. In the chain and the fork, different data points correspond to different days. For the confounder and the cycle, $X_1$ and $X_2$ may represent obesity and sleep apnea, respectively, and $Y$ hours of sleep. The confounder model implements the assumption that obesity and sleep apnea have a common confounder $Z$, e.g., some genetic predisposition. The cycle, on the other hand, represents the more common assumption that there is a reciprocal effect, with obesity affecting sleep apnea and vice versa~\cite{ong2013reciprocal}. In the confounder and the cycle, different data points correspond to different subjects. We assume to have trained a linear model $f(x_1,x_2)$ that happens to largely, or even completely to simplify the formulas, ignore the first feature, and boils down to the prediction function $f(x_1,x_2) = \beta x_2$. Figure~\ref{fig:fourmodels} shows the explanations provided by the various Shapley values for each of the causal models in this extreme situation. Derivations can be found in the supplement.

Since in all cases there is no direct link between $X_1$ and the prediction, the direct effect of $X_1$ is always equal to zero. Similarly, any indirect effect of $X_2$ can only go through $X_1$ and hence must also be equal to zero. So, all we can expect is a direct effect from $X_2$, proportional to $\beta$, and an indirect effect from $X_1$ through $X_1$, proportional to $\alpha$ times $\beta$. Because of the sufficiency property, the direct and indirect effect always add up to the output $\beta x_2$. This makes that, for all different combinations of causal structures and types of Shapley values, we end up with just three different explanation patterns, referred to as {\em D}, {\em E}, and {\em R} in Figure~\ref{fig:fourmodels}.

To argue which explanations make sense, we call upon classical norm theory~\cite{kahneman1986norm}. It states that humans, when asked for an explanation of an effect, contrast the actual observation with a counterfactual, more normal alternative. What is considered normal, depends on the context. Shapley values can be given the same interpretation~\cite{merrick2019explanation}: they measure the difference in prediction between knowing and not knowing the value of a particular feature, where the choice of what's normal translates to the choice of the reference distribution to average over when the feature value is still unknown.

In this perspective, marginal Shapley values as in~\cite{datta2016algorithmic,janzing2019feature,lundberg2020local} correspond to a very simplistic, counterintuitive interpretation of what's normal. Consider for example the case of the chain, with $X_1$ representing season, $X_2$ temperature, and $Y$ bike rental, and two days with the same temperature of 13 degrees Celsius, one in fall and another in winter. Marginal Shapley values end up with the same explanation for the predicted bike rental on both days, ignoring that the temperature in winter is higher than normal for the time of year and in fall lower. Just like marginal Shapley values, symmetric conditional Shapley values as in~\cite{aas2019explaining} do not distinguish between any of the four causal structures. They do take into account the dependency between the two features, but then fail to acknowledge that an {\em intervention} on feature $X_1$ in the fork and the confounder, does not change the distribution of $X_2$.

For the confounder and the cycle, asymmetric Shapley values put $X_1$ and $X_2$ on an equal footing and then coincide with their symmetric counterparts. Asymmetric {\em conditional} Shapley values from~\cite{frye2019asymmetric} have no means to distinguish between the cycle and the confounder, unrealistically assigning credit to $X_1$ in the latter case. Asymmetric and symmetric {\em causal} Shapley values do correctly treat the cycle and confounder cases.

In the case of a chain, asymmetric and symmetric causal Shapley values provide different explanations. Which explanation is to be preferred may well depend on the context. In our bike rental example, asymmetric Shapley values first give full credit to season for the indirect effect (here $\alpha \beta x_1$), subtracting this from the direct effect of the temperature to fulfill the sufficiency property ($\beta x_2 - \alpha \beta x_1$). Symmetric causal Shapley values consider both contexts -- one in which season is intervened upon before temperature, and one in which temperature is intervened upon before season -- and then average over the results in these two contexts. This symmetric strategy appears to better appeal to the theory dating back to~\cite{lewis1974causation}, that humans sample over different possible scenarios (here: different orderings of the features) to judge causation. However, when dealing with a temporal chain of events, alternative theories (see e.g.~\cite{spellman1997crediting}) suggest that humans have a tendency to attribute credit or blame foremost to the root cause, which seems closer in spirit to the explanation provided by asymmetric causal Shapley values.

By dropping the symmetry property, asymmetric Shapley values do pay a price: they are sensitive to the insertion of causal links with zero strength. As an example, consider a neural network trained to perfectly predict the XOR function on two binary variables $X_1$ and $X_2$. With a uniform distribution over all features and no further assumption w.r.t.\ the causal ordering of $X_1$ and $X_2$, the Shapley values are $\phi_1 = \phi_2 = 1/4$ when the prediction $f$ equals $1$, and $\phi_1 = \phi_2 = -1/4$ for $f=0$: completely symmetric. If we now assume that $X_1$ preceeds $X_2$ (and a causal strength of 0 to maintain the uniform distribution over features), all Shapley values stay the same, except for the asymmetric ones: these suddenly jump to $\phi_1 = 0$ and $\phi_2 = 1/2$ for $f=1$, and $\phi_1 = 0$ and $\phi_2 = -1/2$ for $f=0$. More details on this instability of asymmetric Shapley values can be found in the supplement, where we compare Shapley values of trained neural networks for varying causal strengths.

To summarize, unlike marginal and (both symmetric and asymmetric) conditional Shapley values, causal Shapley values provide sensible explanations that incorporate causal relationships in the real world. Asymmetric causal Shapley values may be preferable over symmetric ones when causality derives from a clear temporal order, whereas symmetric Shapley values have the advantage of being much less sensitive to model misspecifications.

\section{A practical implementation with causal chain graphs}

In the ideal situation, a practitioner has access to a fully specified causal model that can be plugged in~(\ref{eq:valuedef}) to compute or sample from every interventional probability of interest. In practice, such a requirement is hardly realistic. In fact, even if a practitioner could specify a complete causal structure (including potential confounding) and has full access to the observational probability $P(\vX)$, not every causal query need be identifiable (see e.g., \cite{pearl2012calculus}). Furthermore, requiring so much prior knowledge could be detrimental to the method's general applicability. In this section, we describe a pragmatic approach that is applicable when we have access to a (partial) causal ordering plus a bit of additional information to distinguish confounders from mutual interactions, and a training set to estimate (relevant parameters of) $P(\vX)$. Our approach is inspired by~\cite{frye2019asymmetric}, but extends it in various aspects: it provides a formalization in terms of causal chain graphs, applies to both symmetric and asymmetric Shapley values, and correctly distinguishes between dependencies that are due to confounding and mutual interactions.

In the special case that a complete causal ordering of the features can be given and that all causal relationships are unconfounded, $P(\vX)$ satisfies the Markov properties associated with a directed acyclic graph (DAG) and can be written in the form
\[
P(\vX) = \prod_{j \in \allfeatures} P(X_j|\vX_{\spa(j)}) \: ,
\]
with $\pa(j)$ the parents of node $j$. With no further conditional independences, the parents of $j$ are all nodes that precede $j$ in the causal ordering. For causal DAGs, we have the interventional formula~\cite{lauritzen2002chain}:
\begin{equation}
P(\vX_{\bar{S}}|\lvdo{S}) = \prod_{j \in \bar{S}} P(X_j|\vX_{\spa(j)  \cap \bar{S}},\vx_{\spa(j) \cap S}) \: ,
\label{eq:interventional}
\end{equation}
with $\pa(j) \cap T$ the parents of $j$ that are also part of subset $T$. The interventional formula can be used to answer any causal query of interest.

\begin{figure}
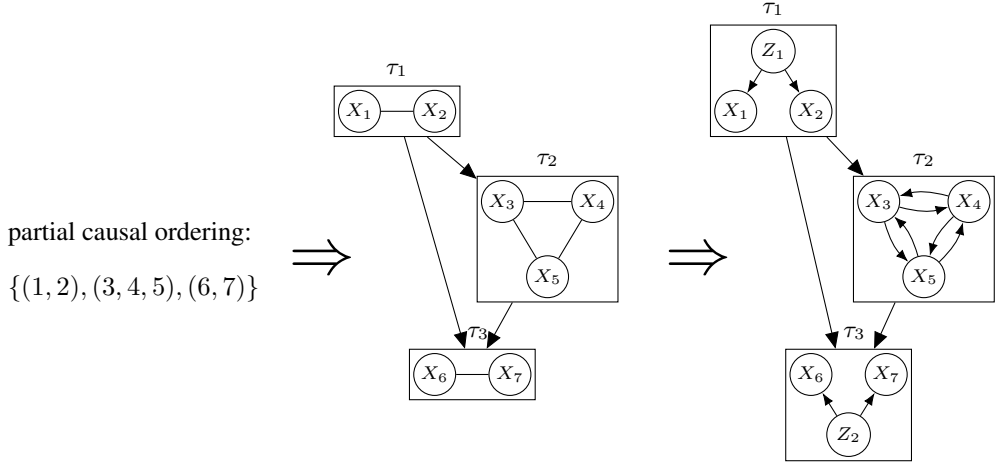

	\centering
	\tikz{
		\node (t1) at (-3,2) [align=left]{partial causal ordering:\\[1em]
		$\left\{(1,2),(3,4,5),(6,7)\right\}$};
		\node (t2) at (-0.5,2) {\Huge $\Rightarrow$};	
		\node (t3) at (4.5,2) {\Huge $\Rightarrow$};		
		\node (1) at (0,4) [circle,draw,inner sep=0.5mm] {\scriptsize $X_1$};
		\node (2) at (1,4) [circle,draw,inner sep=0.5mm] {\scriptsize $X_2$};
		\node (3) at (1.9,2.8) [circle,draw,inner sep=0.5mm] {\scriptsize $X_3$};
		\node (4) at (3.1,2.8) [circle,draw,inner sep=0.5mm] {\scriptsize $X_4$};
		\node (5) at (2.5,1.8) [circle,draw,inner sep=0.5mm] {\scriptsize $X_5$};
		\node (6) at (1,0.5) [circle,draw,inner sep=0.5mm] {\scriptsize $X_6$};
		\node (7) at (2,0.5) [circle,draw,inner sep=0.5mm] {\scriptsize $X_7$};
		\node (8) [draw,label=above:$\tau_1$,inner sep=0.5mm,fit=(1) (2)] {};
		\node (9) [draw,label=above:$\tau_2$,inner sep=0.5mm,fit=(3) (4) (5)] {};
		\node (10) [draw,label=above:$~~\tau_3$,inner sep=0.5mm,fit=(6) (7)] {};
		\draw (1) -- (2);
		\draw (3) -- (4) -- (5) -- (3);
		\draw (6) -- (7);
		\edge {8} {9};
		\edge {8,9} {10};
		\node (11) at (5,4) [circle,draw,inner sep=0.5mm] {\scriptsize $X_1$};
		\node (12) at (6,4) [circle,draw,inner sep=0.5mm] {\scriptsize $X_2$};
		\node (13) at (6.9,2.8) [circle,draw,inner sep=0.5mm] {\scriptsize $X_3$};
		\node (14) at (8.1,2.8) [circle,draw,inner sep=0.5mm] {\scriptsize $X_4$};
		\node (15) at (7.5,1.8) [circle,draw,inner sep=0.5mm] {\scriptsize $X_5$};
		\node (16) at (6,0.5) [circle,draw,inner sep=0.5mm] {\scriptsize $X_6$};
		\node (17) at (7,0.5) [circle,draw,inner sep=0.5mm] {\scriptsize $X_7$};
		\node (21) at (5.5,4.8) [circle,draw,inner sep=0.7mm]
		{\scriptsize $Z_1$};
		\node (22) at (6.5,-0.3) [circle,draw,inner sep=0.7mm]
		{\scriptsize $Z_2$};
		\node (18) [draw,label=above:$\tau_1$,inner sep=0.5mm,fit=(11) (12) (21)] {};
		\node (19) [draw,label=above:$\tau_2$,inner sep=0.5mm,fit=(13) (14) (15)] {};
		\node (20) [draw,label=above:$~~\tau_3$,inner sep=0.5mm,fit=(16) (17) (22)] {};
		\edge {18} {19};
		\edge {18,19} {20};
		\draw[-Latex] (13) to[bend right=15] (14);
		\draw[-Latex] (14) to[bend right=15] (13);
		\draw[-Latex] (13) to[bend right=15] (15);
		\draw[-Latex] (15) to[bend right=15] (13);
		\draw[-Latex] (15) to[bend right=15] (14);
		\draw[-Latex] (14) to[bend right=15] (15);
		\draw[-Latex] (22) to (16);
		\draw[-Latex] (22) to (17);
		\draw[-Latex] (21) to (11);
		\draw[-Latex] (21) to (12);
	}	
	\caption{From partial ordering to causal chain graph. Features on an equal footing are combined into a fully connected chain component. How to handle interventions within each component depends on the generative process that best explains the (surplus) dependencies. In this example, the dependencies in chain components $\tau_1$ and $\tau_3$ are assumed to be the result of a common confounder, and those in $\tau_2$ of mutual interactions.}
	\label{fig:chaingraph}
\end{figure}

When we cannot give a complete ordering between the individual variables, but still a partial ordering, causal chain graphs~\cite{lauritzen2002chain} come to the rescue. A causal chain graph has directed and undirected edges. All features that are treated on an equal footing are linked together with undirected edges and become part of the same chain component. Edges between chain components are directed and represent causal relationships. See Figure~\ref{fig:chaingraph} for an illustration of the procedure. The probability distribution $P(\vX)$ in a chain graph factorizes as a ``DAG of chain components'':
\[
P(\vX) = \prod_{\tau \in \chaincomponents} P(\vX_\tau|\vX_{\spa(\tau)}) \: ,
\]
with each $\tau$ a chain component, consisting of all features that are treated on an equal footing.

How to compute the effect of an intervention depends on the interpretation of the generative process leading to the (surplus) dependencies between features within each component. If we assume that these are the consequence of marginalizing out a common confounder, intervention on a particular feature will break the dependency with the other features. We will refer to the set of chain components for which this applies as $\onder{\chaincomponents}{confounding}$. The undirected part can also correspond to the equilibrium distribution of a dynamic process resulting from interactions between the variables within a component~\cite{lauritzen2002chain}. In this case, setting the value of a feature does affect the distribution of the variables within the same component. We refer to these sets of components as $\chaincomponents_{\overline{\textrm{\upshape \scriptsize confounding}}}$.

Any expectation by intervention needed to compute the causal Shapley values can be translated to an expectation by observation, by making use of the following theorem (see the supplement for a more detailed proof and some corollaries linking back to other types of Shapley values as special cases).
\begin{theorem}
For causal chain graphs, we have the interventional formula 
\begin{eqnarray}
P(\vX_{\bar{S}}|\lvdo{S}) \isequal \prod_{\tau \in \chaincomponents_{\textrm{\upshape \scriptsize confounding}}} P(\vX_{\tau \cap \bar{S}}|\vX_{\spa(\tau)  \cap \bar{S}},\vx_{\spa(\tau) \cap S}) \times \nonumber \\
&& \prod_{\tau \in \chaincomponents_{\overline{\textrm{\upshape \scriptsize confounding}}}} P(\vX_{\tau \cap \bar{S}}|\vX_{\spa(\tau) \cap \bar{S}},\vx_{\spa(\tau) \cap S},\vx_{\tau \cap S}) \: .
\label{eq:chaininterventional}
\end{eqnarray}
\end{theorem}

\begin{proof}
\begin{eqnarray*}
	P(\vX_{\bar{S}}|\lvdo{S}) \isequaldo{1} \prod_{\tau \in \chaincomponents} P(\vX_{\tau \cap \bar{S}}|\vX_{\spa(\tau)  \cap \bar{S}},\lvdo{S}) \hfill \\
	\isequaldo{3}  \prod_{\tau \in \chaincomponents} P(\vX_{\tau \cap \bar{S}}|\vX_{\spa(\tau)  \cap \bar{S}},\lvdo{\spa(\tau) \cap S},\lvdo{\tau \cap S}) \\
	\isequaldo{2}  \prod_{\tau \in \chaincomponents} P(\vX_{\tau \cap \bar{S}}|\vX_{\spa(\tau)  \cap \bar{S}},\vx_{\spa(\tau) \cap S},\lvdo{\tau \cap S}) \: ,
\end{eqnarray*}
where the number above each equal sign refers to the standard \textit{do}-calculus rule from~\cite{pearl2012calculus} that is applied. For a chain component with dependencies induced by a common confounder, rule (3) applies once more and yields $P(\vX_{\tau \cap \bar{S}}|\vX_{\spa(\tau)  \cap \bar{S}},\vx_{\spa(\tau) \cap S})$,
whereas for a chain component with dependencies induced by mutual interactions, rule (2) again applies and gives $P(\vX_{\tau \cap \bar{S}}|\vX_{\spa(\tau)  \cap \bar{S}},\vx_{ \spa(\tau) \cap S},\vx_{\tau \cap S}))$.
\end{proof}

To compute these observational expectations, we can rely on the various methods that have been proposed to compute conditional Shapley values~\cite{aas2019explaining,frye2019asymmetric}. Following~\cite{aas2019explaining}, we will assume a multivariate Gaussian distribution for $P(\vX)$ that we estimate from the training data. Alternative proposals include assuming a Gaussian copula distribution, estimating from the empirical (conditional) distribution (both from~\cite{aas2019explaining}) and a variational autoencoder~\cite{frye2019asymmetric}.

\section{Illustration on real-world data}

\begin{figure}[t]
	\centering
	\begin{minipage}{.49\linewidth}
		\includegraphics[width=\textwidth]{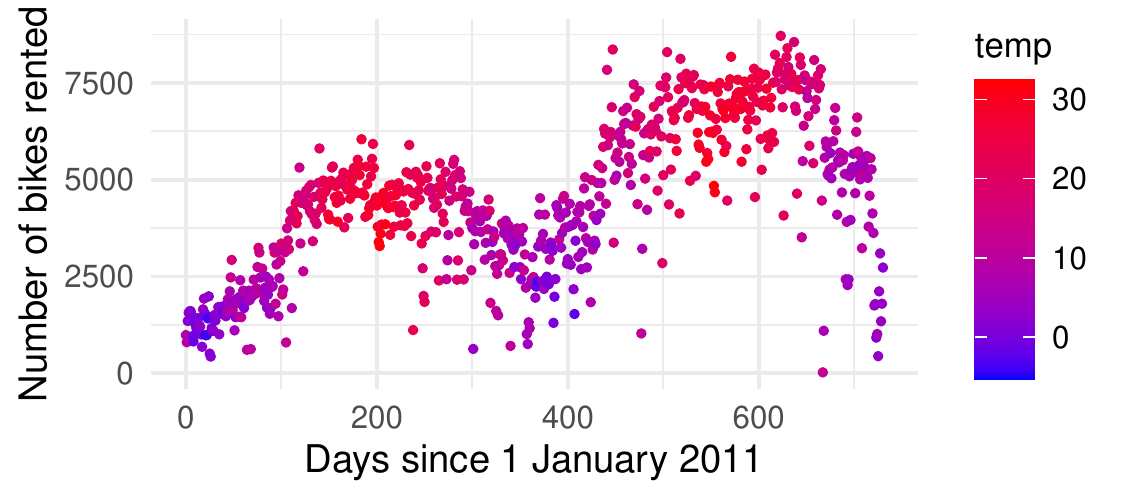}
		\includegraphics[width=\textwidth]{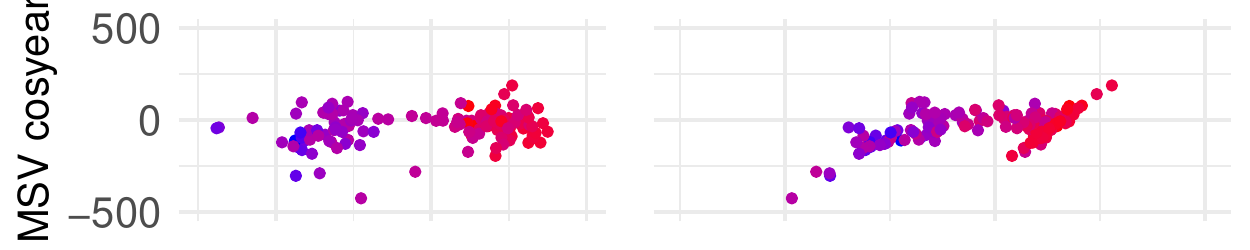}
		\includegraphics[width=\textwidth]{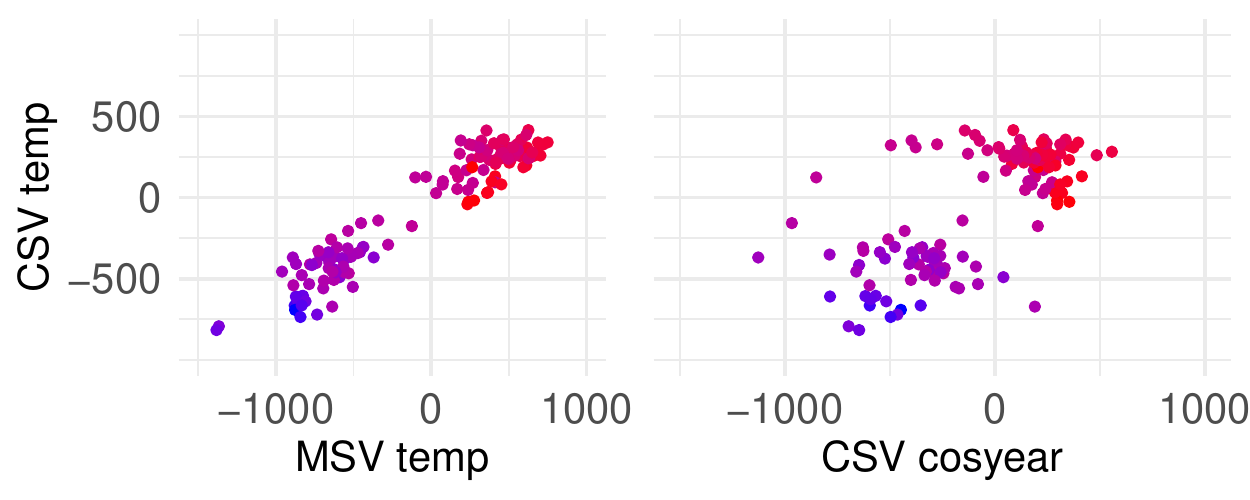}
	\end{minipage}
	\begin{minipage}{.5\linewidth}
		\vfill
		\includegraphics[width=\textwidth]{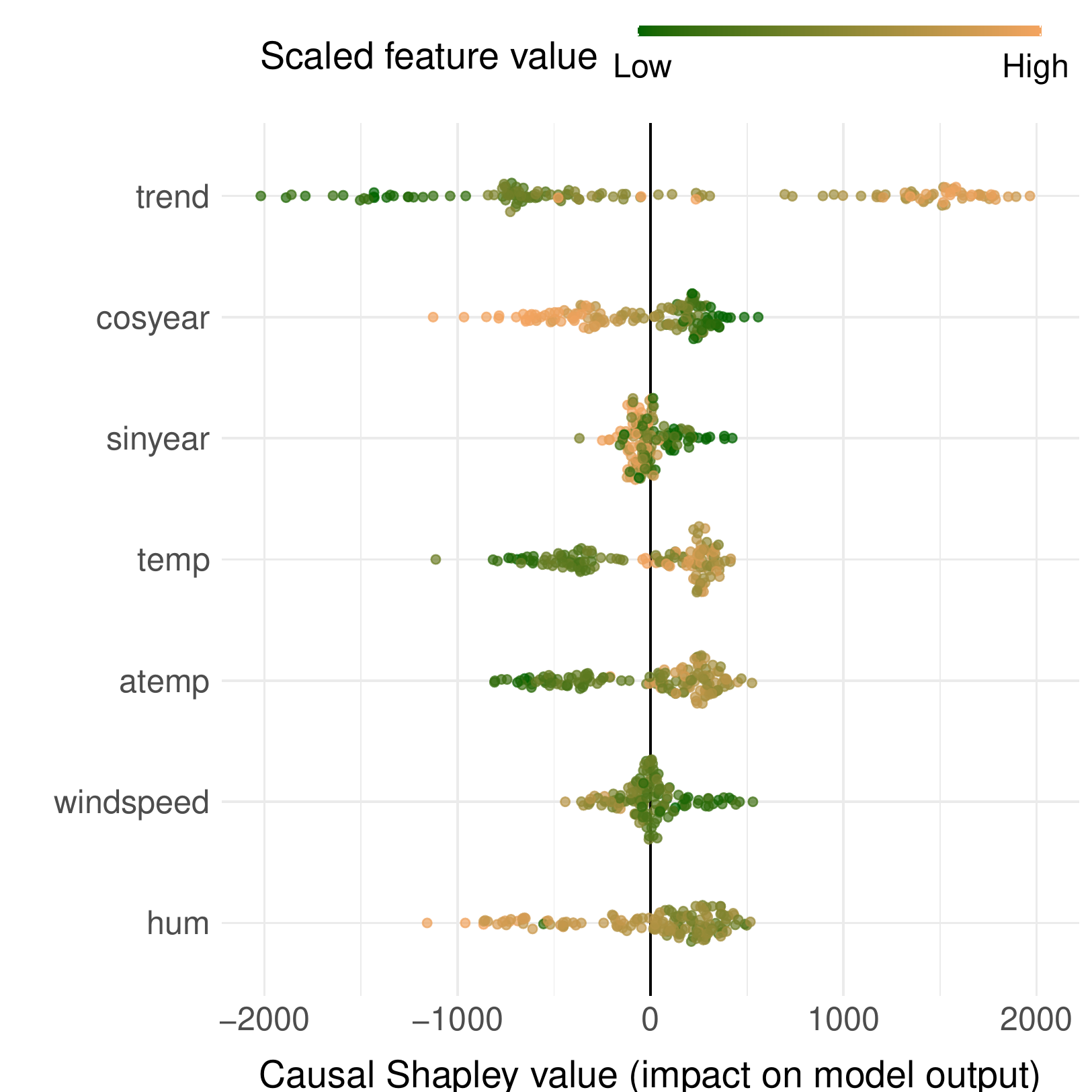}
	\end{minipage}
	\caption{Bike shares in Washington, D.C.\ in 2011-2012 (top left; colorbar with temperature in degrees Celsius). Sina plot of causal Shapley values for a trained XGBoost model, where the top three date-related variables are considered to be a potential cause of the four weather-related variables (right). Scatter plots of marginal (MSV) versus causal Shapley values (CSV) for temperature ({\em temp}) and one of the seasonal variables ({\em cosyear}) show that MSVs almost purely explain the predictions based on temperature, whereas CSVs also give credit to season (bottom left).}
	\label{fig:trendplot}
\end{figure}

To illustrate the difference between marginal and causal Shapley values, we consider the bike rental dataset from~\cite{fanaee2013bikerental}, where we take as features the number of days since January 2011 ({\em trend}), two cyclical variables to represent season ({\em cosyear}, {\em sinyear}), the temperature ({\em temp}), feeling temperature ({\em atemp}), wind speed ({\em windspeed}), and humidity ({\em hum}). As can be seen from the time series itself (top left plot in Figure~\ref{fig:trendplot}), the bike rental is strongly seasonal and shows an upward trend. Data was randomly split in 80\% training and 20\% test set. We trained an XGBoost model for 100 rounds. 

We adapted the R package SHAPR from~\cite{aas2019explaining} to compute causal Shapley values, which essentially boiled down to an adaptation of the sampling procedure so that it draws samples from the interventional conditional distribution~(\ref{eq:chaininterventional}) instead of from a conventional observational conditional distribution. The sina plot on the righthand side of Figure~\ref{fig:trendplot} shows the causal Shapley values calculated for the trained XGBoost model on the test data. For this simulation, we chose the partial order $(\{\textit{trend}\},\{\textit{cosyear},\textit{sinyear}\},\{\textrm{all weather variables}\})$, with confounding for the second component and no confounding for the third, to represent that season has an effect on weather, but that we have no clue how to represent the intricate relations between the various weather variables. The sina plot clearly shows the relevance of the trend and the season (in particular cosine of the year, which is -1 on January 1 and +1 on July 1). The scatter plots on the left zoom in on the causal (CSV) and marginal Shapley values (MSV) for {\em cosyear} and {\em temp.} The marginal Shapley values for {\em cosyear} vary over a much smaller range than the causal Shapley values for {\em cosyear}, and vice versa for the Shapley values for {\em temp}: where the marginal Shapley values explain the predictions predominantly based on temperature, the causal Shapley values give season much more credit for the higher bike rental in summer and the lower bike rental in winter. Sina plots for marginal and asymmetric conditional Shapley values can be found in the supplement.

\begin{wrapfigure}[21]{r}{0.5\textwidth}
	\begin{center}
		\includegraphics[width=0.48\textwidth]{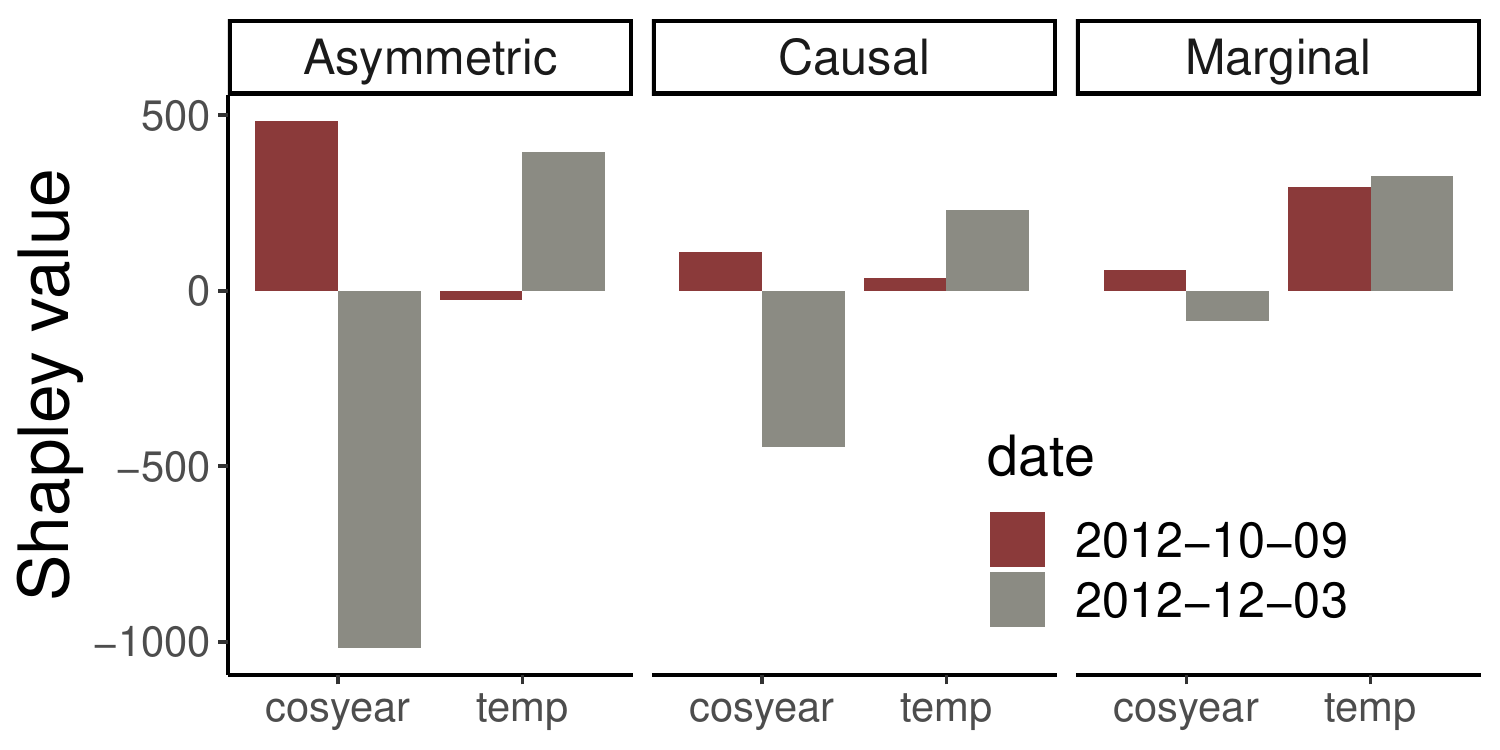}
	\end{center}
	\caption{Asymmetric (conditional), (symmetric) causal and marginal Shapley values for two different days, one in October (brown) and one in December (gray) with more or less the same temperature of 13 degrees Celsius. Asymmetric Shapley values focus on the root cause, marginal Shapley values on the more direct effect, and symmetric causal Shapley consider both for the more natural explanation.}
	\label{fig:barplots}
\end{wrapfigure}
The difference between asymmetric (conditional, from~\cite{frye2019asymmetric}), (symmetric) causal, and marginal Shapley values clearly shows when we consider two days, October 10 and December 3, 2012, with more or less the same temperature of 13 and 13.27 degrees Celsius, and predicted bike counts of 6117 and 6241, respectively. The temperature and predicted bike counts are relatively low for October, yet high for December. The various Shapley values for {\em cosyear} and {\em temp} are shown in Figure~\ref{fig:barplots}. The marginal Shapley values provide more or less the same explanation for both days, essentially only considering the more direct effect {\em temp}. The asymmetric conditional Shapley values, which are almost indistinguishable from the asymmetric causal Shapley values in this case, put a huge emphasis on the `root' cause {\em cosyear}. The (symmetric) causal Shapley values nicely balance the two extremes, giving credit to both season and temperature, to provide a sensible, but still different explanation for the two days.

\section{Discussion}

In real-world systems, understanding {\em why} things happen typically implies a causal perspective. It means distinguishing between important, contributing factors and irrelevant side effects. Similarly, understanding why a certain instance leads to a given output by a complex algorithm asks for those features that carry a significant amount of information contributing to the final outcome. Our insight was to recognize the need to properly account for the underlying causal structure between the features in order to derive meaningful and relevant attributive properties in the context of a complex algorithm.

For that, this paper introduced causal Shapley values, a model-agnostic approach to split a model's prediction of the target variable for an individual data point into contributions of the features that are used as input to the model, where each contribution aims to estimate the total effect of that feature on the target and can be decomposed into a direct and an indirect effect. We contrasted causal Shapley values with (interventional interpretations of) marginal and (asymmetric variants of) conditional Shapley values. We proposed a novel algorithm to compute these causal Shapley values, based on causal chain graphs. All that a practitioner needs to provide is a partial causal order (as for asymmetric Shapley values) and a way to interpret dependencies between features that are on an equal footing. Existing code for computing conditional Shapley values is easily generalized to causal Shapley values, without additional computational complexity. Computing conditional and causal Shapley values can be considerably more expensive than computing marginal Shapley values due to the need to sample from conditional instead of marginal distributions, even when integrated with computationally efficient approaches such as KernelSHAP~\cite{lundberg2017unified} and TreeExplainer~\cite{lundberg2020local}.

Our approach should be a promising step in providing clear and intuitive explanations for predictions made by a wide variety of complex algorithms, that fits well with natural human understanding and expectations. Additional user studies should confirm to what extent explanations provided by causal Shapley values align with the needs and requirements of practitioners in real-world settings. Similar ideas may also be applied to improve current approaches for (interactive) counterfactual explanations~\cite{wachter2017counterfactual} and properly distinguish between direct and total effects of features on a model's prediction. If successful, causal approaches that better match human intuition may help to build much needed trust in the decisions and recommendations of powerful modern-day algorithms.

\section*{Broader Impact}

Our research, which aims to provide an explanation for complex machine learning models that can be understood by humans, falls within the scope of explainable AI (XAI). XAI methods like ours can help to open up the infamous ``black box'' of complicated machine learning models like deep neural networks and decision tree ensembles. A better understanding of the predictions generated by such models may provide higher trust~\cite{ribeiro2016should}, detect flaws and biases~\cite{kusner2017counterfactual}, higher accuracy~\cite{bhatt2020explainable}, and even address the legal ``right for an explanation'' as formulated in the GDPR~\cite{gdpr2017}.

Despite their good intentions, explanation methods do come with associated risks. Almost by definition, any sensible explanation of a complex machine learning system involves some simplification and hence must sacrifice some accuracy. It is important to better understand what these limitations are~\cite{kumar2020problems}. Model-agnostic general purpose explanation tools are often applied without properly understanding their limitations and over-trusted~\cite{kaur2020interpreting}. They could possibly even be misused just to check a mark in internal or external audits. Automated explanations can further give an unjust sense of transparency, sometimes referred to as the `transparency fallacy'~\cite{edwards2017slave}: overestimating one's actual understanding of the system. Last but not least, tools for explainable AI are still mostly used as an internal resource by engineers and developers to identify and reconcile errors~\cite{bhatt2020explainable}.

Causality is essential to understanding any process and system, including complex machine learning models. Humans have a strong tendency to reason about their environment and to frame explanations in causal terms~\cite{sloman2005causal,lombrozo2017causal} and causal-model theories fit well to how humans, for example, classify objects~\cite{rehder2003causal}. In that sense, explanation approaches like ours, that appeal to a human's capability for causal reasoning should represent a step in the right direction~\cite{mittelstadt2019explaining}.

\begin{ack}
This research has been partially financed by the Netherlands Organisation for Scientific Research (NWO), under project 617.001.451. 
\end{ack}

\bibliography{../shapleyrefs}

\begin{thebibliography}{10}

\bibitem{aas2019explaining}
Kjersti Aas, Martin Jullum, and Anders L{\o}land.
\newblock Explaining individual predictions when features are dependent: {M}ore
  accurate approximations to {S}hapley values.
\newblock {\em arXiv preprint arXiv:1903.10464}, 2019.

\bibitem{bhatt2020explainable}
Umang Bhatt, Alice Xiang, Shubham Sharma, Adrian Weller, Ankur Taly, Yunhan
  Jia, Joydeep Ghosh, Ruchir Puri, Jos{\'e}~MF Moura, and Peter Eckersley.
\newblock Explainable machine learning in deployment.
\newblock In {\em Proceedings of the 2020 Conference on Fairness,
  Accountability, and Transparency}, pages 648--657, 2020.

\bibitem{datta2016algorithmic}
Anupam Datta, Shayak Sen, and Yair Zick.
\newblock Algorithmic transparency via quantitative input influence: {T}heory
  and experiments with learning systems.
\newblock In {\em 2016 IEEE Symposium on Security and Privacy (SP)}, pages
  598--617. IEEE, 2016.

\bibitem{edwards2017slave}
Lilian Edwards and Michael Veale.
\newblock Slave to the algorithm: {W}hy a right to an explanation is probably
  not the remedy you are looking for.
\newblock {\em Duke Law \& Technology Review}, 16:18, 2017.

\bibitem{fanaee2013bikerental}
Hadi Fanaee-T and Joao Gama.
\newblock Event labeling combining ensemble detectors and background knowledge.
\newblock {\em Progress in Artificial Intelligence}, 2:113--127, 2014.

\bibitem{frye2019asymmetric}
Christopher Frye, Ilya Feige, and Colin Rowat.
\newblock Asymmetric {S}hapley values: {I}ncorporating causal knowledge into
  model-agnostic explainability.
\newblock {\em arXiv preprint arXiv:1910.06358}, 2019.

\bibitem{gerstenberg2012noisy}
Tobias Gerstenberg, Noah Goodman, David Lagnado, and Joshua Tenenbaum.
\newblock Noisy {N}ewtons: Unifying process and dependency accounts of causal
  attribution.
\newblock In {\em Proceedings of the Annual Meeting of the Cognitive Science
  Society}, volume~34, pages 378--383, 2012.

\bibitem{janzing2019feature}
Dominik Janzing, Lenon Minorics, and Patrick Bl{\"o}baum.
\newblock Feature relevance quantification in explainable {AI}: A causal
  problem.
\newblock In {\em International Conference on Artificial Intelligence and
  Statistics}, pages 2907--2916. PMLR, 2020.

\bibitem{kahneman1986norm}
Daniel Kahneman and Dale~T Miller.
\newblock Norm theory: {C}omparing reality to its alternatives.
\newblock {\em Psychological Review}, 93(2):136, 1986.

\bibitem{kaur2020interpreting}
Harmanpreet Kaur, Harsha Nori, Samuel Jenkins, Rich Caruana, Hanna Wallach, and
  Jennifer Wortman~Vaughan.
\newblock Interpreting interpretability: {U}nderstanding data scientists' use
  of interpretability yools for machine learning.
\newblock In {\em Proceedings of the 2020 CHI Conference on Human Factors in
  Computing Systems}, pages 1--14, 2020.

\bibitem{kumar2020problems}
I~Elizabeth Kumar, Suresh Venkatasubramanian, Carlos Scheidegger, and Sorelle
  Friedler.
\newblock Problems with {S}hapley-value-based explanations as feature
  importance measures.
\newblock {\em arXiv preprint arXiv:2002.11097}, 2020.

\bibitem{kusner2017counterfactual}
Matt~J Kusner, Joshua Loftus, Chris Russell, and Ricardo Silva.
\newblock Counterfactual fairness.
\newblock In {\em Advances in Neural Information Processing Systems}, pages
  4066--4076, 2017.

\bibitem{lauritzen2002chain}
Steffen~L Lauritzen and Thomas~S Richardson.
\newblock Chain graph models and their causal interpretations.
\newblock {\em Journal of the Royal Statistical Society: Series B (Statistical
  Methodology)}, 64(3):321--348, 2002.

\bibitem{lewis1974causation}
David Lewis.
\newblock Causation.
\newblock {\em The Journal of Philosophy}, 70(17):556--567, 1974.

\bibitem{lipovetsky2001analysis}
Stan Lipovetsky and Michael Conklin.
\newblock Analysis of regression in game theory approach.
\newblock {\em Applied Stochastic Models in Business and Industry},
  17(4):319--330, 2001.

\bibitem{lombrozo2017causal}
Tania Lombrozo and Nadya Vasilyeva.
\newblock Causal explanation.
\newblock {\em Oxford Handbook of Causal Reasoning}, pages 415--432, 2017.

\bibitem{lundberg2020local}
Scott~M Lundberg, Gabriel Erion, Hugh Chen, Alex DeGrave, Jordan~M Prutkin,
  Bala Nair, Ronit Katz, Jonathan Himmelfarb, Nisha Bansal, and Su-In Lee.
\newblock From local explanations to global understanding with explainable {AI}
  for trees.
\newblock {\em Nature Machine Intelligence}, 2(1):2522--5839, 2020.

\bibitem{lundberg2018consistent}
Scott~M Lundberg, Gabriel~G Erion, and Su-In Lee.
\newblock Consistent individualized feature attribution for tree ensembles.
\newblock {\em arXiv preprint arXiv:1802.03888}, 2018.

\bibitem{lundberg2017unified}
Scott~M Lundberg and Su-In Lee.
\newblock A unified approach to interpreting model predictions.
\newblock In {\em Advances in Neural Information Processing Systems}, pages
  4765--4774, 2017.

\bibitem{merrick2019explanation}
Luke Merrick and Ankur Taly.
\newblock The explanation game: Explaining machine learning models with
  cooperative game theory.
\newblock {\em arXiv preprint arXiv:1909.08128}, 2019.

\bibitem{mittelstadt2019explaining}
Brent Mittelstadt, Chris Russell, and Sandra Wachter.
\newblock Explaining explanations in {AI}.
\newblock In {\em Proceedings of the Conference on Fairness, Accountability,
  and Transparency}, pages 279--288, 2019.

\bibitem{ong2013reciprocal}
Chong~Weng Ong, Denise~M O’Driscoll, Helen Truby, Matthew~T Naughton, and
  Garun~S Hamilton.
\newblock The reciprocal interaction between obesity and obstructive sleep
  apnoea.
\newblock {\em Sleep Medicine Reviews}, 17(2):123--131, 2013.

\bibitem{pearl1995causal}
Judea Pearl.
\newblock Causal diagrams for empirical research.
\newblock {\em Biometrika}, 82(4):669--688, 1995.

\bibitem{pearl2012calculus}
Judea Pearl.
\newblock The \textit{do}-calculus revisited.
\newblock {\em arXiv preprint arXiv:1210.4852}, 2012.

\bibitem{rehder2003causal}
Bob Rehder.
\newblock A causal-model theory of conceptual representation and
  categorization.
\newblock {\em Journal of Experimental Psychology: Learning, Memory, and
  Cognition}, 29(6):1141, 2003.

\bibitem{ribeiro2016should}
Marco~Tulio Ribeiro, Sameer Singh, and Carlos Guestrin.
\newblock ``{W}hy should {I} trust you?'' {E}xplaining the predictions of any
  classifier.
\newblock In {\em Proceedings of the 22nd ACM SIGKDD International Conference
  on Knowledge Discovery and Data Mining}, pages 1135--1144, 2016.

\bibitem{shapley1953value}
Lloyd~S Shapley.
\newblock A value for n-person games.
\newblock {\em Contributions to the Theory of Games}, 2(28):307--317, 1953.

\bibitem{sloman2005causal}
Steven Sloman.
\newblock {\em Causal models: How people think about the world and its
  alternatives}.
\newblock Oxford University Press, 2005.

\bibitem{sober1988apportioning}
Elliott Sober.
\newblock Apportioning causal responsibility.
\newblock {\em The Journal of Philosophy}, 85(6):303--318, 1988.

\bibitem{spellman1997crediting}
Barbara~A Spellman.
\newblock Crediting causality.
\newblock {\em Journal of Experimental Psychology: General}, 126(4):323, 1997.

\bibitem{vstrumbelj2014explaining}
Erik {\v{S}}trumbelj and Igor Kononenko.
\newblock Explaining prediction models and individual predictions with feature
  contributions.
\newblock {\em Knowledge and Information Systems}, 41(3):647--665, 2014.

\bibitem{gdpr2017}
European Union.
\newblock {EU General Data Protection Regulation (GDPR)}: Regulation ({EU})
  2016/679 of the {E}uropean {P}arliament and of the {C}ouncil of 27 {A}pril
  2016 on the protection of natural persons with regard to the processing of
  personal data and on the free movement of such data, and repealing directive
  95/46/{EC} ({General Data Protection Regulation), OJ 2016 L 119/1}, 2016.

\bibitem{wachter2017counterfactual}
Sandra Wachter, Brent Mittelstadt, and Chris Russell.
\newblock Counterfactual explanations without opening the black box: Automated
  decisions and the {GDPR}.
\newblock {\em Harvard Journal of Law and Technology}, 31:841, 2017.

\end{thebibliography}


\begin{thebibliography}{1}

\bibitem{aas2019explaining}
Kjersti Aas, Martin Jullum, and Anders L{\o}land.
\newblock Explaining individual predictions when features are dependent: {M}ore
  accurate approximations to {S}hapley values.
\newblock {\em arXiv preprint arXiv:1903.10464}, 2019.

\bibitem{forre2019causal}
Patrick Forr{\'e} and Joris~M Mooij.
\newblock Causal calculus in the presence of cycles, latent confounders and
  selection bias.
\newblock In {\em Proceedings of the 35th Annual Conference on Uncertainty in
  Artificial Intelligence}, 2019.

\bibitem{frye2019asymmetric}
Christopher Frye, Ilya Feige, and Colin Rowat.
\newblock Asymmetric {S}hapley values: {I}ncorporating causal knowledge into
  model-agnostic explainability.
\newblock {\em arXiv preprint arXiv:1910.06358}, 2019.

\bibitem{janzing2019feature}
Dominik Janzing, Lenon Minorics, and Patrick Bl{\"o}baum.
\newblock Feature relevance quantification in explainable {AI}: A causal
  problem.
\newblock In {\em International Conference on Artificial Intelligence and
  Statistics}, pages 2907--2916. PMLR, 2020.

\bibitem{pearl2012calculus}
Judea Pearl.
\newblock The \textit{do}-calculus revisited.
\newblock {\em arXiv preprint arXiv:1210.4852}, 2012.

\bibitem{shen2020challenges}
Xinpeng Shen, Sisi Ma, Prashanthi Vemuri, and Gyorgy Simon.
\newblock Challenges and opportunities with causal discovery algorithms:
  {A}pplication to {A}lzheimer’s pathophysiology.
\newblock {\em Scientific reports}, 10(1):1--12, 2020.

\end{thebibliography}
\bibliographystyle{plain}

\end{document}


\maketitle

\newcommand{\perpsg}{\stackrel[G]{\sigma}{\Perp}}
\newcommand{\vW}{\mathbf{W}}
\newcommand{\vY}{\mathbf{Y}}
\newcommand{\vZ}{\mathbf{Z}}

\section{\textit{Do}-calculus for cyclic graphs}

For completeness, we here repeat the rules of {\em do}-calculus for cyclic graphs, in the notation of the generalized ID algorithm of ~\cite{forre2019causal}, which generalizes~\cite{pearl2012calculus}. We are given a causal graph $G$. To each node $X_i$ which is intervened upon, we add an `intervention node' $I_{X_i}$, with a directed edge from $I_{X_i}$ to $X_i$ that we clamp to the value $x_i$. The corresponding graph is called $\hat{G}^+$. $\hat{G}_{\dodo(\vW)}$ is now obtained by removing from $\hat{G}^+$ all incoming edges to variables that are part of $\vW$, except those from the corresponding intervention nodes $I_{\vW}$. We use shorthand
\[
\vY \perpsg \vX ~|~ \vZ, \dodo(\mathbf{W})
\]
to indicate that $\vY$ and $\vX$ are $\sigma$-separated by $\vZ$ in the graph $\hat{G}_{\dodo(\vW)}$.  $\sigma$-separation is a generalization of standard d-separation (see~\cite{forre2019causal} for details).

\textit{Do}-calculus now consists of the following three inference rules that can be used to map interventional and observational distributions.
\begin{enumerate}
	\item Insertion/deletion of observation:
	\[
	 P(\vY|\vX,\vZ,\dodo(\vW)) = P(\vY|\vZ,\dodo(\vW)) \mbox{~~if~~} \vY \perpsg \vX ~|~ \vZ,\dodo(\vW) \: .
	\]
	\item Action/observation exchange:
	\[
	P(\vY|\dodo(\vX),\vZ,\dodo(\vW)) = P(\vY|\vX,\vZ,\dodo(\vW)) \mbox{~~if~~} \vY \perpsg I_{\vX} ~|~ \vX, \vZ, \dodo(\vW)  \: .
	\]
	\item Insertion/deletion of actions:
	\[
	P(\vY|\dodo(\vX),\vZ,\dodo(\vW)) = P(\vY|\vZ,\dodo(\vW)) \mbox{~~if~~} \vY \perpsg I_{\vX} ~|~ \vZ,\dodo(\vW) \: .
	\]	
\end{enumerate}
Through consecutive application of these rules, we can try to turn any interventional probability of interest into an observational probability.

\section{Shapley values for linear models}

\begin{figure}
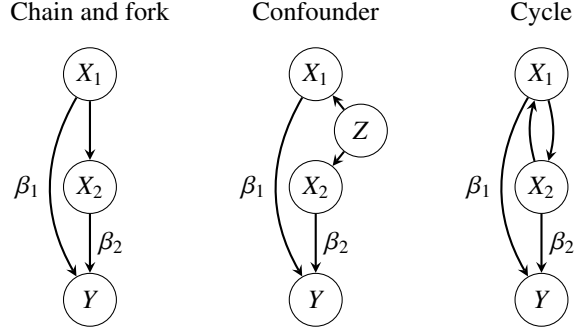

		\centering
		\tikzstyle{arrow} = [thick,->,>=stealth]
		\tikzstyle{dashedarrow} = [thick,->,>=stealth,dashed]
		\tikz{
			\node[latent] at (0,0) (y1) {$Y$};%
			\node[latent] at (0,1.5) (x12) {$X_2$};
			\node[latent] at (0,3) (x11) {$X_1$};
			\node[text height=1em, anchor=north] at (0,4.2) {Chain and fork};
			\draw[arrow] (x12) -- node[anchor=west]{$\beta_2$} (y1);
			\draw[arrow] (x11) -- (x12);
			\draw[arrow] (x11) to[bend right] node[anchor=east]{$\beta_1$}(y1);
			\node[latent] at (3,0) (y3) {$Y$};
			\node[latent] at (3,1.5) (x32) {$X_2$};
			\node[latent] at (3,3) (x31) {$X_1$};
			\node[latent] at (3.6,2.25) (z)  {$Z$};
			\node[text height=1em, anchor=north] at (3,4.2) {Confounder};
			\draw[arrow] (x32) -- node[anchor=west]{$\beta_2$} (y3);
			\draw[arrow] (z) -- (x31);
			\draw[arrow] (z) -- (x32);
			\draw[arrow] (x31) to[bend right] node[anchor=east]{$\beta_1$}(y3);
			\node[latent] at (6,0) (y4) {$Y$};%
			\node[latent] at (6,1.5) (x42) {$X_2$};
			\node[latent] at (6,3) (x41) {$X_1$};
			\node[text height=1em, anchor=north] at (6,4.2) {Cycle};
			\draw[arrow] (x42) -- node[anchor=west]{$\beta_2$} (y4);
			\draw[arrow] (x41) to[bend left=15] (x42);
			\draw[arrow] (x42) to[bend left=15] (x41);
			\draw[arrow] (x41) to[bend right] node[anchor=east]{$\beta_1$}(y4);			
		}
	\caption{Three causal models with the same observational distribution over features, yet a different causal structure. To connect to the models in the main text, we set $\beta_1 = 0$ and $\beta_2 = \beta$, except that for the `fork' we set $\beta_2 = 0$, $\beta_1 = \beta$, and then swap the indices.}
	\label{fig:threemodels}
\end{figure}

We will make use of the {\em do}-calculus rules above to derive the causal Shapley values for the four different models in Figure~1 in the main text. To this end, we consider the three models in Figure~\ref{fig:threemodels} that predict $f(x_1,x_2) = \beta_1 x_1 + \beta_2 x_2$ for general values of $\beta_1$ and $\beta_2$. All three models have the same observational probability distribution, with $\expectation[X_i] = \bx_i$ and $\expectation[X_{3-i}|X_i=x_i] = \alpha_i x_i$, for $i = 1,2$, yet different causal structures. We will arrive at the main text's results for the `chain', `confounder', and `cycle' by setting $\beta_1 = 0$, whereas for the `fork' we set $\beta_2 = 0$ and swap the two indices. We then further need to take $\bx_1 = \bx_2 = 0$, and $\alpha = \alpha_2$.

Following the definitions in the main text, the contribution of feature $i$ given permutation $\perm$ is the difference in value function before and after setting the feature to its value:
\begin{equation}
\contribution_i(\perm) = \val(\{j: j \preceq_\perm i\}) - \val(\{j: j \prec_\perm i\}) \: ,
\label{eq:contperm}
\end{equation}
with value function
\begin{equation}
\val(S) = \expectation \left[f(\vX) | \lvdo{S} \right] = \int d\vX_{\bar{S}} \: P(\vX_{\bar{S}}|\svdo{S}) f(\vX_{\bar{S}},\vx_S) \: ,
\label{eq:valuedef}
\end{equation}
where we use shorthand $\svdo{}$ for $\lvdo{}$. Combining these two definitions and substituting $f(\vx) = \sum_i \beta_i x_i$, we obtain
\[
\contribution_i(\perm) =
\beta_i \left(x_i - \expectation [X_i | \svdo{j: j \prec_\perm i}]\right) + \sum_{k \succ_\perm i} \beta_k \left( \expectation [X_k | \svdo{j: j \preceq_\perm i}] - \expectation [X_k | \svdo{j: j \prec_\perm i}] \right) \: .
\]
The first term corresponds to the direct effect, the second one to the indirect effect. Symmetric causal Shapley values will follow by averaging these contributions for the two possible permutations $\perm = (1,2)$ and $\perm = (2,1)$. Conditional Shapley values result when replacing conditioning by intervention with conventional conditioning by observation, marginal Shapley values by not conditioning at all.

To start with the latter, we immediately see that for {\em marginal Shapley values} the indirect effect vanishes and the direct effect simplifies to
\[
\phi_i = \phi_i(\perm) = \beta_i (x_i - \expectation[X_i])  = \beta_i (x_i - \bx_i) \: ,
\]
as also derived in~\cite{aas2019explaining}.

For symmetric conditional Shapley values, we do get different contributions for the two different permutations, but by definition still the same for the three different models:
\begin{eqnarray}
\phi_1(1,2) \isequal \beta_1 (x_1 - \expectation[X_1]) + \beta_2 (\expectation[X_2|x_1] - \expectation[X_2]) = \beta_1 (x_1 - \bx_1) + \beta_2 \alpha_1 (x_1 - \bx_1) \nonumber \\
\phi_2(1,2) \isequal \beta_2 (x_2 - \expectation[X_2|x_1]) = \beta_2 (x_2 - \bx_2) - \beta_2 \alpha_1 (x_1 - \bx_1) \: .
\label{eq:asymmetric}
\end{eqnarray}
Here the first term in the contribution for the first feature corresponds to the direct effect and the second term to the indirect effect. The contribution for the second feature only consists of a direct effect. The contributions for the other permutation follow by swapping the indices and the final Shapley values by averaging to arrive at the {\em symmetric conditional Shapley values}
\begin{eqnarray}
\phi_1 \isequal \beta_1 (x_1 - \bx_1) - {1 \over 2} \beta_1 \alpha_2 (x_2 - \bx_2) + {1 \over 2} \beta_2 \alpha_1 (x_1 - \bx_1) \nonumber \\
\phi_2 \isequal \beta_2 (x_2 - \bx_2) - {1 \over 2} \beta_2 \alpha_1 (x_1 - \bx_1) + {1 \over 2} \beta_1 \alpha_2 (x_2 - \bx_2) \: ,
\label{eq:symmetric}
\end{eqnarray}
where now the first two terms constitute the direct effect and the third term the indirect effect.

\begin{table}
\begin{center}
\begin{tabular}{r|ccc} \toprule
expectation & chain & confounder & cycle \\ \midrule
$\expectation[X_1|\sdo{2}]$ & $\expectation[X_1]$ & $\expectation[X_1]$ & $\expectation[X_1|x_2]$ \\
$\expectation[X_2|\sdo{1}]$ & $\expectation[X_2|x_1]$ & $\expectation[X_2]$ & $\expectation[X_2|x_1]$ \\
\bottomrule
\end{tabular}
\end{center}
\caption{Turning expectations under conditioning by intervention into expectations under conventional conditioning by observation for the three models in Figure~\ref{fig:threemodels}.}
\label{tab:rewriting}
\end{table}

The {\em asymmetric conditional Shapley values} consider both permutations for the confounder and the cycle, and hence are equivalent to the symmetric Shapley values for those models. Yet for the chain, they only consider the permutation $\perm(1,2)$ and thus yield $\bm{\phi} = \bm{\phi}(1,2)$ from~(\ref{eq:asymmetric}).

To go from the symmetric conditional Shapley values to the causal symmetric Shapley values, we follow the same line of reasoning, but have to replace $\expectation[X_2|x_1]$ by $\expectation[X_2|\sdo{1}]$ and $\expectation[X_1|x_2]$ by $\expectation[X_1|\sdo{2}]$. Table~\ref{tab:rewriting} tells whether
the expectations under conditioning by intervention reduce to expectations under conditioning by observation (because of the second rule of \textit{do}-calculus above) or to marginal expectations (because of the third rule). For the chain we have
\[
P(X_2 | \sdo{1}) = P(X_2 | x_1) \mbox{~~since~~} X_2 \perpsg I_{X_1} ~|~ X_1 \mbox{~~(rule 2),~yet~~} P(X_1 | \sdo{2}) = P(X_1) \mbox{~~since~~} X_1 \perpsg I_{X_2} \mbox{~~(rule 3),}
\]
for the confounder
\[
P(X_2 | \sdo{1}) = P(X_2) \mbox{~~since~~} X_2 \perpsg I_{X_1}  \mbox{~~and~~} P(X_1 | \sdo{2}) = P(X_1) \mbox{~~since~~} X_1 \perpsg I_{X_2} \mbox{~~(rule 3),}
\]
and for the cycle
\[
P(X_2 | \sdo{1}) = P(X_2 | x_1) \mbox{~~since~~} X_2 \perpsg I_{X_1} ~|~ X_1 \mbox{~~and~~} P(X_1 | \sdo{2}) = P(X_1|x_2) \mbox{~~since~~} X_1 \perpsg I_{X_2} ~|~ X_2 \mbox{~~(rule 2).}
\]
Consequently, for the confounder the {\em symmetric} and {\em asymmetric causal Shapley values} coincide with the marginal Shapley values (consistent with~\cite{janzing2019feature}) and for the cycle with the symmetric conditional Shapley values from~(\ref{eq:symmetric}). For the chain, the causal symmetric Shapley values become
\begin{eqnarray}
\phi_1(1,2) \isequal \beta_1 (x_1 - \bx_1) + {1 \over 2} \beta_2 \alpha_1 (x_1 - \bx_1) \nonumber \\
\phi_2(1,2) \isequal \beta_2 (x_2 -\bx_2) - {1 \over 2} \beta_2 \alpha_1 (x_1 - \bx_1) \: ,
\label{eq:asymmetric2}
\end{eqnarray}
where the asymmetric causal Shapley values coincides with the asymmetric conditional Shapley values from~(\ref{eq:asymmetric2}).

Collecting all results and setting $\bx_1 = \bx_2 = \beta_1 = 0$, $\beta_2 = \beta$, and $\alpha_1 = \alpha$ (after swapping the indices for the `fork'), we arrive at the Shapley values reported in Figure~1 in the main text. Note that for most Shapley values, the indirect effect for the second feature vanishes because we chose to set $\beta_1 = 0$. The exceptions, apart from the marginal Shapley values, are the causal Shapley values for the chain and the confounder, as well as the asymmetric conditional Shapley values for the chain: these show no indirect effect for the second feature even for nonzero $\beta_1$.

\section{Proofs and corollaries on causal chain graphs}

In this section we expand on the proof of Theorem~1 in the main text and add some corollaries to link back to other approaches for computing Shapley values.

The probability distribution for a causal chain graph boils down to a directed acyclic graph of chain components:
\begin{equation}
P(\vX) = \prod_{\tau \in \chaincomponents} P(\vX_\tau|\vX_{\spa(\tau)}) \: .
\label{eq:chaingraph}
\end{equation}
For each (fully connected) chain component, we further need to specify whether (surplus) dependencies within the component are due to confounding or due to mutual interactions. Given this information, we can turn any causal query into an observational distribution with the following interventional formula.

\begin{theorem}
	For causal chain graphs, we have the interventional formula 
	\begin{eqnarray}
	P(\vX_{\bar{S}}|\lvdo{S}) \isequal \prod_{\tau \in \chaincomponents_{\textrm{\upshape \scriptsize confounding}}} P(\vX_{\tau \cap \bar{S}}|\vX_{\spa(\tau)  \cap \bar{S}},\vx_{\spa(\tau) \cap S}) \times \nonumber \\
	&& \prod_{\tau \in \chaincomponents_{\overline{\textrm{\upshape \scriptsize confounding}}}} P(\vX_{\tau \cap \bar{S}}|\vX_{\spa(\tau) \cap \bar{S}},\vx_{\spa(\tau) \cap S},\vx_{\tau \cap S}) \: .
	\label{eq:chaininterventional}
	\end{eqnarray}
\end{theorem}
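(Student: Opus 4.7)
The plan is to derive the formula by propagating the intervention $\lvdo{S}$ through the chain graph factorization in Eq.~\ref{eq:chaingraph} one component at a time. Since each mechanism $P(\vX_\tau|\vX_{\spa(\tau)})$ is an autonomous module that depends only on $\tau$ and its parents, the product structure survives under intervention: intervened parents are clamped to $\vx_{\spa(\tau)\cap S}$, intervened within-component variables are clamped to $\vx_{\tau \cap S}$, and the remaining work is to evaluate, for each component $\tau$, the per-component factor $P(\vX_{\tau\cap\bar{S}}|\vX_{\spa(\tau)\cap\bar{S}},\vx_{\spa(\tau)\cap S},\lvdo{\tau\cap S})$. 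I would then integrate out $\vX_S$ trivially, since the intervention fixes these coordinates to Dirac masses, so the only nontrivial step is the rewriting of each per-component factor in observational form.

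Next I would split on the nature of each chain component. For a confounding component, I would unfold the undirected edges as latent common causes in the augmented graph $\hat{G}^+$. In $\hat{G}_{\lvdo{S}}$ all incoming edges into $\vX_{\tau\cap S}$ from these latent confounders are removed, so every directed path from the intervention node $I_{\vX_{\tau\cap S}}$ to $\vX_{\tau\cap\bar{S}}$ must pass through the now-sink vertex $\vX_{\tau\cap S}$, giving $\vX_{\tau\cap\bar{S}} \perpsg I_{\vX_{\tau\cap S}} \mid \vX_{\spa(\tau)}$. Do-calculus rule~3 then drops the intervention entirely, yielding the first product. For a mutual-interaction component I would unfold the undirected edges as a small cyclic DAG whose arrows persist after intervention. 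The only σ-connecting route from $I_{\vX_{\tau\cap S}}$ to $\vX_{\tau\cap\bar{S}}$ goes through its unique child $\vX_{\tau\cap S}$, which is blocked once we condition on $\vx_{\tau\cap S}$. Do-calculus rule~2 then exchanges the intervention for observation, yielding the second product. Multiplying across components gives Eq.~\ref{eq:chaininterventional}.

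The main obstacle is the σ-separation bookkeeping in the augmented cyclic graph: I have to argue that after unfolding each chain component according to its confounding or mutual-interaction type, no unintended σ-connecting paths arise between different components via the shared parent structure, and that within each component the path analysis sketched above is exhaustive. This is where Forré's generalized framework is essential, since classical d-separation is insufficient for the cyclic structure induced by mutual-interaction components, and one has to verify that conditioning on $\vX_{\spa(\tau)}$ (plus possibly $\vx_{\tau\cap S}$) really does block every σ-connecting walk from the intervention node.
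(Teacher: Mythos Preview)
Your proposal is correct and lands on the same two-case split (rule~3 for confounding components, rule~2 for mutual-interaction components) that the paper uses. The overall architecture---factorize along chain components, localize the intervention to each component, then resolve the remaining within-component intervention by case---is the same.

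The one substantive difference is how you get to the per-component factor $P(\vX_{\tau\cap\bar{S}}\mid\vX_{\spa(\tau)\cap\bar{S}},\vx_{\spa(\tau)\cap S},\lvdo{\tau\cap S})$. You invoke mechanism autonomy (``the product structure survives under intervention'') and then flag the cross-component $\sigma$-separation bookkeeping as the main obstacle to be verified. The paper instead derives this factor by three explicit \textit{do}-calculus moves: rule~1 to prune the conditioning set from all predecessors to $\pa(\tau)$, rule~3 to discard interventions on components other than $\tau$ and $\pa(\tau)$, and rule~2 to turn the interventions on $\pa(\tau)\cap S$ into ordinary conditioning. That sequence is exactly the ``no unintended $\sigma$-connecting paths between components'' check you were worried about, and it dispatches it without ever unfolding the chain components into explicit latent-variable or cyclic subgraphs. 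Your unfolding is not wrong, but it is extra machinery: the paper's route stays at the level of the chain graph and the three rules, which keeps the argument shorter and sidesteps the path-enumeration you anticipated as the hard part.
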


\begin{proof}
Plugging in~(\ref{eq:chaingraph}) and using shorthand $\svdo{} = \lvdo{}$, we obtain
\[
P(\vX_{\bar{S}}|\svdo{S}) = P(\vX|\svdo{S}) = \prod_{\tau \in \chaincomponents} P(\vX_{\tau}|\vX_{\tau' \prec_G \tau},\svdo{S})
\isequaldono{1} \prod_{\tau \in \chaincomponents} P(\vX_\tau|\vX_{\spa(\tau)},\svdo{S}) = \prod_{\tau \in \chaincomponents} P(\vX_{\tau \cap \bar{S}}|\vX_{\spa(\tau)  \cap \bar{S}},\svdo{S}) \: ,
\]
where in the second step we made use of {\em do}-calculus rule~(1): the conditional independencies in the causal chain graph $G$ are preserved when we intervene on some of the variables. Now rule~(3) tells us that we can ignore any interventions from nodes in components further down the causal chain graph as well as those from higher up that are shielded by the direct parents:
\[
P(\vX_{\tau \cap \bar{S}}|\vX_{\spa(\tau)  \cap \bar{S}},\svdo{S})
\isequaldono{3} P(\vX_{\tau \cap \bar{S}}|\vX_{\spa(\tau)  \cap \bar{S}},\svdo{\spa(\tau) \cap S},\svdo{\tau \cap S}) \: .
\]
Rule~(2) then states that conditioning by intervention upon variables higher up in the causal chain graph is equivalent to conditioning by observation:
\[
P(\vX_{\tau \cap \bar{S}}|\vX_{\spa(\tau)  \cap \bar{S}},\svdo{\spa(\tau) \cap S},\svdo{\tau \cap S}) \isequaldono{2}
P(\vX_{\tau \cap \bar{S}}|\vX_{\spa(\tau)  \cap \bar{S}},\vx_{\spa(\tau) \cap S},\svdo{\tau \cap S}) \: .
\]
For a chain component with dependencies induced by a common confounder, rule (3) applies once more and makes that we can ignore the interventions:
\[
P(\vX_{\tau \cap \bar{S}}|\vX_{\spa(\tau)  \cap \bar{S}},\vx_{\spa(\tau) \cap S},\svdo{\tau \cap S}) = P(\vX_{\tau \cap \bar{S}}|\vX_{\spa(\tau)  \cap \bar{S}},\vx_{\spa(\tau) \cap S}) \: .
\]
For a chain component with dependencies induced by mutual interactions, rule (2) again applies and allows us to replace conditioning by intervention with conditioning by observation:
\[
P(\vX_{\tau \cap \bar{S}}|\vX_{\spa(\tau)  \cap \bar{S}},\vx_{\spa(\tau) \cap S},\svdo{\tau \cap S}) = P(\vX_{\tau \cap \bar{S}}|\vX_{\spa(\tau)  \cap \bar{S}},\vx_{ \spa(\tau) \cap S},\vx_{\tau \cap S})) \: .
\]
\end{proof}

Algorithm~\ref{alg:sampling} provides pseudocode on how to estimate the value function $\val(S)$ by drawing samples from the interventional probability~(\ref{eq:chaininterventional}). It assumes that a user has specified a partial causal ordering of the features, which is translated to a chain graph with components $\chaincomponents$, and for each (non-singleton) component $\tau$ whether or not surplus dependencies are the result of confounding. Other prerequisites include access to the model $f()$, the feature vector $\vx$, (a procedure to sample from) the observational probability distribution $P(\vX)$, and the number of samples $\onder{N}{samples}$.

\begin{algorithm}[t]
	\caption{Compute the value function $\val(S)$ under conditioning by intervention.}
	\label{alg:sampling}
	\begin{algorithmic}[1]
		\Function{ValueFunction}{$S$}
		\For{$k \gets 1 \mbox{~to~} \onder{N}{samples}$}
		\ForAll{$j \gets 1 \mbox{~to~} |\chaincomponents|$} \Comment{run over all chain components in their causal order}
			\If{confounding$(\tau_j)$} 
				\ForAll{$i \in \tau_j \cap \bar{S}$}
					\State Sample $\tilde{x}_i^{(k)} \sim P(X_i|\tilde{\vx}_{\spa(\tau_j) \cap \bar{S}}^{(k)},\vx_{\spa(\tau_j) \cap \bar{S}})$ \Comment{can be drawn independently}
				\EndFor
			\Else
				\State Sample $\tilde{\vx}_{\tau_j \cap \bar{S}}^{(k)} \sim P(\vX_{\tau_j \cap \bar{S}}|\tilde{\vx}_{\spa(\tau_j) \cap \bar{S}}^{(k)},\vx_{\spa(\tau_j) \cap \bar{S}},\vx_{\tau_j \cap S})$ \Comment{e.g., Gibbs sampling}
			\EndIf
		\EndFor
		\EndFor
		\State $\val \gets {\displaystyle {1 \over \onder{N}{samples}} \sum_{k=1}^{\onder{N}{samples}} f(\vx_S,\tilde{\vx}_{\bar{S}}^{(k)})}$
		\State \Return $\val$
		\EndFunction
	\end{algorithmic}
\end{algorithm}

Theorem~1 connects to observations made and algorithms proposed in recent papers.
\begin{corollary}
	With all features combined in a single component and all dependencies induced by confounding, as in~\cite{janzing2019feature}, causal Shapley values are equivalent to marginal Shapley values. 
\end{corollary}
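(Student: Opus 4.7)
My plan is to obtain this corollary as a direct specialization of Theorem~1. Under the stated hypothesis, the chain graph contains a single component $\tau$ that coincides with the full set of features, so $\chaincomponents = \{\tau\}$ with $\spa(\tau) = \emptyset$, and $\tau$ lies in the confounding part $\chaincomponents_{\textrm{confounding}}$. I would plug these identifications into the interventional formula~(\ref{eq:chaininterventional}).

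With $\spa(\tau) = \emptyset$, both $\spa(\tau) \cap \bar{S}$ and $\spa(\tau) \cap S$ are empty, and with $\tau$ equal to the full feature set, $\tau \cap \bar{S} = \bar{S}$. The non-confounding product in~(\ref{eq:chaininterventional}) is an empty product equal to one, and the confounding product collapses to a single factor
\[
P(\vX_{\bar{S}}|\lvdo{S}) = P(\vX_{\bar{S}}) \: ,
\]
i.e.\ the marginal distribution of the unintervened features, independent of the values $\vx_S$. Substituting this into the value function~(\ref{eq:valuedef}) yields
\[
\val(S) = \int d\vX_{\bar S} \: P(\vX_{\bar S}) \, f(\vX_{\bar S}, \vx_S) = \expectation[f(\vX_{\bar S}, \vx_S)] \: ,
\]
which is precisely the marginal value function used by~\cite{aas2019explaining} and~\cite{janzing2019feature}.

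Because the Shapley values are determined by $\val(S)$ through~(\ref{eq:contperm}) and subsequent averaging over permutations, agreement of the value functions on every subset $S$ immediately implies agreement of the resulting Shapley values. I do not anticipate a real obstacle here; the only thing to be careful about is verifying that the hypothesis indeed forces $\spa(\tau) = \emptyset$ and that the non-confounding product is vacuous, so that no $\vx_{\tau \cap S}$ ever enters the conditional.
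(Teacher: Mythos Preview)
Your proposal is correct and follows exactly the paper's approach: specialize the interventional formula~(\ref{eq:chaininterventional}) to a single confounded component with $\pa(\tau)=\emptyset$ to obtain $P(\vX_{\bar S}\mid\svdo{S})=P(\vX_{\bar S})$. The paper's proof is a one-line version of yours; your additional remarks about the value function and the permutation average are fine but already implicit in the framework.
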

\begin{proof}
	With just a single confounded component $\tau$, $\pa(\tau) = \emptyset$ and~(\ref{eq:chaininterventional}) reduces to $P(\vX_{\bar{S}})$.
\end{proof}
\begin{corollary}
	With all features combined in a single component and all dependencies induced by mutual interactions, causal Shapley values are equivalent to conditional Shapley values as proposed in~\cite{aas2019explaining}.
\end{corollary}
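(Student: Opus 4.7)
The plan is to specialize Theorem~1 to the degenerate chain graph in which all features sit in a single non-confounding component $\tau$, and then verify that the interventional distribution~(\ref{eq:chaininterventional}) collapses to the ordinary conditional distribution underlying~\cite{aas2019explaining}.

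First I would note that with a single component $\chaincomponents = \{\tau\}$, necessarily $\spa(\tau) = \emptyset$, so every occurrence of $\vX_{\spa(\tau) \cap \bar{S}}$ and $\vx_{\spa(\tau) \cap S}$ in~(\ref{eq:chaininterventional}) drops out. Since $\tau$ is declared to arise from mutual interactions rather than confounding, the product over $\chaincomponents_{\textrm{\scriptsize confounding}}$ is empty, and only the second factor survives. Because $\tau$ contains all features, $\tau \cap \bar{S} = \bar{S}$ and $\tau \cap S = S$, and the formula reduces to $P(\vX_{\bar{S}} | \svdo{S}) = P(\vX_{\bar{S}} | \vx_S)$.

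Substituting this into the value function~(\ref{eq:valuedef}) yields $\val(S) = \expectation[f(\vX) | \vX_S = \vx_S]$, which is exactly the conditional-expectation value function used to define the Shapley values of~\cite{aas2019explaining}. Since both schemes then apply the standard Shapley averaging over permutations to an identical value function, the resulting attributions coincide feature by feature.

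The only delicate point---that conditioning by intervention on part of a mutual-interactions component can be replaced by conditioning by observation on those same variables---is already discharged inside the proof of Theorem~1 via rule~(2) of \emph{do}-calculus, so no additional obstacle arises here. The argument is therefore essentially a bookkeeping check that both products in~(\ref{eq:chaininterventional}) simplify as claimed.
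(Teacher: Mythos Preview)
Your proposal is correct and follows essentially the same route as the paper: specialize Theorem~1 to a single non-confounding component with empty parent set so that~(\ref{eq:chaininterventional}) collapses to $P(\vX_{\bar{S}}\mid\vx_S)$. You have simply spelled out more of the bookkeeping (the intersections $\tau\cap S$ and $\tau\cap\bar S$, the Shapley averaging step) than the paper's one-line proof does.
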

\begin{proof}
	With just a single non-confounded component $\tau$, $\pa(\tau) = \emptyset$ and~(\ref{eq:chaininterventional}) reduces to $P(\vX_{\bar{S}}|\vx_{S})$.
\end{proof}
\begin{corollary}
	When we only take into account permutations that match the causal ordering and assume that all dependencies within chain components are induced by mutual interactions, the resulting asymmetric causal Shapley values are equivalent to the asymmetric conditional Shapley values as defined in~\cite{frye2019asymmetric}.
\end{corollary}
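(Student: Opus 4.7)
The plan is to show that under both restrictions---causally-consistent permutations and no confounding within chain components---the causal value function $\val(S)$ coincides with the observational conditional value function $\expectation[f(\vX)|\vx_S]$ used in~\cite{frye2019asymmetric} on every $S$ that arises as a prefix of an admissible permutation. Since both approaches average over the same restricted set of permutations, equality of value functions on these $S$ gives equality of the Shapley values themselves.

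First I would characterize the admissible prefixes. A causally-consistent permutation must process every parent chain component before any of its children, so if $\tau \cap S \neq \emptyset$ then every ancestor component of $\tau$ is fully contained in $S$, and in particular $\spa(\tau) \subseteq S$. Partitioning the components into those (A) fully in $S$, (B) straddling $S$ and $\bar{S}$, and (C) fully in $\bar{S}$, types A and B thus have $\spa(\tau) \subseteq S$, while type C may have parents split across both sides.

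Second, I would apply Theorem~1 in its all-non-confounding form. Type-A factors disappear (since $\tau \cap \bar{S} = \emptyset$) and type-B factors simplify by $\spa(\tau) \subseteq S$, yielding a product of conditionals indexed only by B- and C-components. I would then match this directly to $P(\vX_{\bar{S}}|\vx_S)$: starting from the chain-graph factorization~(\ref{eq:chaingraph}), split each type-B factor $P(\vx_{\tau \cap S},\vX_{\tau \cap \bar{S}}|\vx_{\spa(\tau)})$ as $P(\vx_{\tau \cap S}|\vx_{\spa(\tau)})\,P(\vX_{\tau \cap \bar{S}}|\vx_{\tau \cap S},\vx_{\spa(\tau)})$. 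The $\vX_{\bar{S}}$-independent pieces gather into a constant, and the surviving pieces reproduce the interventional product term-by-term. It remains to check that this product already integrates to one in $\vX_{\bar{S}}$, so that normalization by $P(\vx_S)$ is trivial; this follows by integrating the components in reverse topological order, each surviving factor being a bona fide conditional density over its own random variables given strictly upstream ones.

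The main obstacle, I expect, is the bookkeeping around type-C components whose parent features are split across $S$ and $\bar{S}$: one must verify that each factor remains a proper conditional density in its random arguments and that the residual random dependencies respect the component DAG, so the reverse-topological integration telescopes cleanly to one. Once the identity $P(\vX_{\bar{S}}|\lvdo{S}) = P(\vX_{\bar{S}}|\vx_S)$ is established for every admissible prefix $S$, the per-permutation contributions~(\ref{eq:contperm}) agree with those of~\cite{frye2019asymmetric}, and averaging over the restricted permutations yields the claimed equivalence.
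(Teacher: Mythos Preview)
Your proposal is correct and follows essentially the same approach as the paper: characterize the admissible prefixes $S$ (the paper phrases this as ``components touching $S$ are never later in the causal ordering than those touching $\bar{S}$''), then show that for such $S$ the interventional formula of Theorem~1 with $\onder{\chaincomponents}{confounding}=\emptyset$ coincides with the observational conditional $P(\vX_{\bar{S}}|\vx_S)$. The paper's version is slightly terser---it factorizes $P(\vX_{\bar{S}}|\vx_S)$ directly via the chain-graph Markov property in one line rather than working from the joint factorization and verifying normalization by reverse-topological integration---but the substance is identical.
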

\begin{proof}
	Following~\cite{frye2019asymmetric}, asymmetric Shapley values only include those permutations $\perm$ for which $i \prec_\perm j$ for all known ancestors $i$ of descendants $j$ in the causal graph. For those permutations, we are guaranteed to have $\tau \prec_G \tau'$ for all $\tau, \tau' \in \mathcal{T}$ such that $\tau \cap S \neq \emptyset, \tau' \cap \bar{S} \neq \emptyset$. That is, the chain components that contain features from $S$ are never later in the causal ordering of the chain graph $G$ than those that contain features from $\bar{S}$. We then have
	\[
	P(\vX_{\bar{S}}|\vx_S) = \prod_{\tau \in \chaincomponents} P(\vX_{\tau \cap \bar{S}}|\vX_{\spa(\tau)  \cap \bar{S}},\vx_{S}) = \prod_{\tau \in \chaincomponents} P(\vX_{\tau \cap \bar{S}}|\vX_{\spa(\tau)  \cap \bar{S}},\vx_{\spa(\tau) \cap S},\vx_{\tau \cap S}) = P(\vX_{\bar{S}}|\svdo{S}) \: ,
	\]
	where in the last step we used interventional formula~(\ref{eq:chaininterventional}) in combination with the fact that $\onder{\chaincomponents}{confounding} = \emptyset$.
\end{proof}

\section{Additional illustrations on the bike rental data}

Figure~\ref{fig:sinaplots} shows sina plots for asymmetric conditional Shapley values (left) and marginal Shapley values (right), to be compared with the sina plots for symmetric causal Shapley values in Figure~3 of the main text. In this case, the sina plots for asymmetric causal Shapley values are virtually indistinguishable from those for the asymmetric conditional Shapley values.

It can be seen that the marginal Shapley values strongly focus on temperature, largely ignoring the seasonal variables. The asymmetric Shapley values do the opposite: they focus on the seasonal variables, in particular {\em cosyear} and put much less emphasis on the temperature variables.

\begin{figure}[t]
	\centering
	\begin{minipage}{.49\linewidth}
	\includegraphics[width=\textwidth]{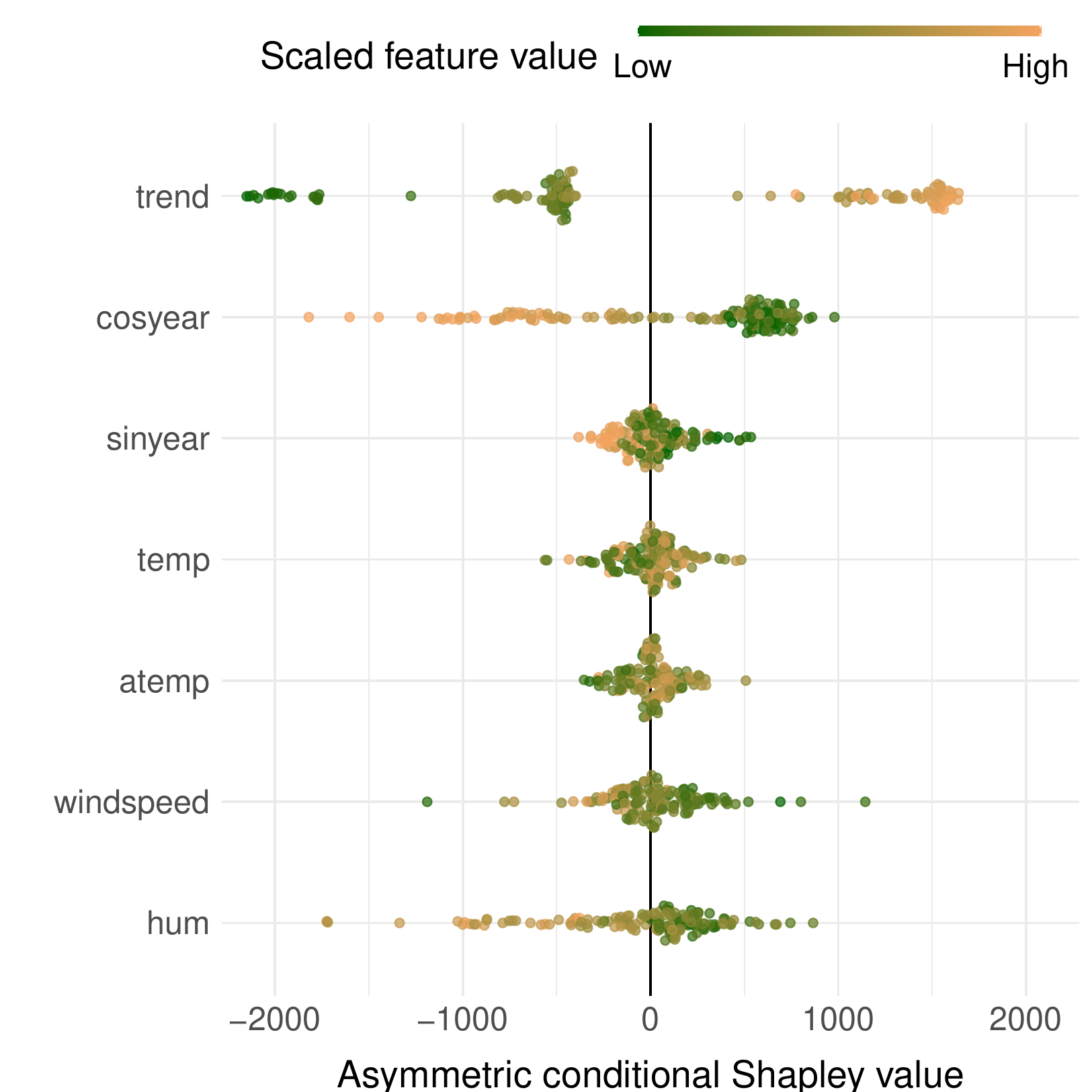}
	\end{minipage}
	\begin{minipage}{.49\linewidth}
	\includegraphics[width=\textwidth]{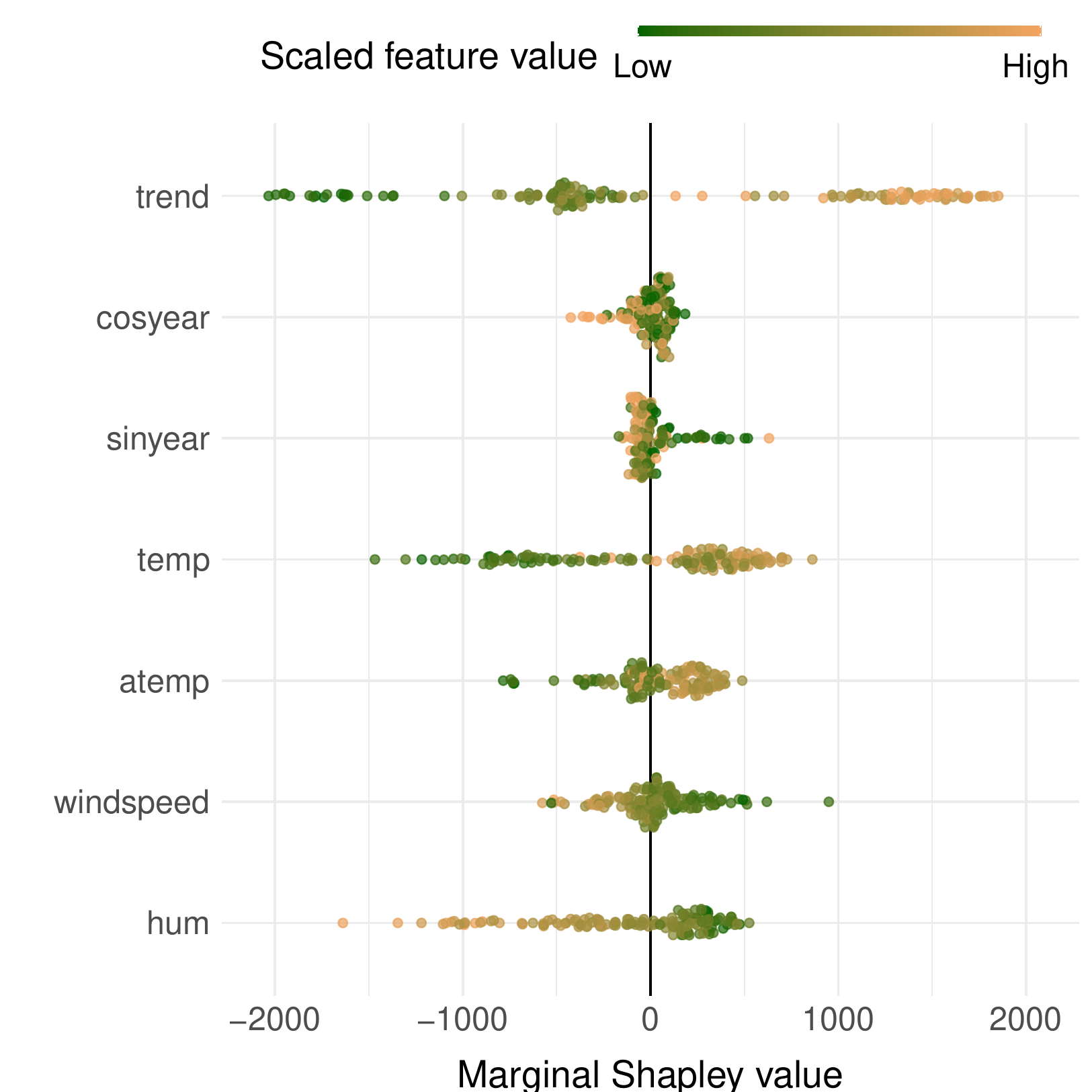}
	\end{minipage}
	\caption{Sina plots of asymmetric (conditional) Shapley values (left) and marginal Shapley values (right) on the bike rental data. See Figure~3 in the main text for further details.}
	\label{fig:sinaplots}
\end{figure}

\section{Comparing symmetric and asymmetric Shapley values on the XOR problem}

We consider the standard XOR problem with binary features $X_1$ and $X_2$ and binary output $Y$:
\begin{center}
\begin{tabular}{cc|c}
	$X_1$ & $X_2$ & Y \\ \hline
	0     & 0     & 0 \\
	0     & 1     & 1 \\
	1     & 0     & 1 \\
	1     & 1     & 0
\end{tabular}
\end{center}
We generate a dataset of $n$ samples by drawing features and corresponding outputs with probabilities $p_{ij} = P(X_1=i,X_2=j)$. We will choose $p_{00} = p_{11} = \frac{1}{4} (1 + \epsilon)$ and $p_{01} = p_{10} = \frac{1}{4} (1 - \epsilon)$. With $\epsilon > 0$, the probability of the two features having the same values is larger than the probability of them having different values. $\hat{p}_{ij}$ is the same probability estimated from the data, e.g., by computing the frequencies of the four input combinations. We train a neural network on the generated data, which yields a function $\hat{f}(X_1,X_2)$ hopefully closely approximating the correct XOR function. The parameter $\epsilon$ captures the dependency between the two features and can be interpreted as a measure of the causal strength when the two features are causally related.

We will now compute the various Shapley values for the data point $(X_1,X_2) = (0,0)$. The value functions with all features either `out-of-coalition' or `in-coalition' are the same for all Shapley values:
\begin{eqnarray*}
\nu(\{\}) & = & \expectation\left[f(\vX)\right] = \sum_{i,j} \hat{p}_{ij} \hat{f}(i,j) \approx \frac{1}{2} (1 - \epsilon) \\
\nu(\{1,2\}) & = & \hat{f}(0,0) \approx 0 \: ,
\end{eqnarray*}
where we use the convention that the Shapley values computed from the fitted probabilities and learned neural network appear before the $\approx$-sign, and those that we obtain when the fitted probabilities equal the probabilities used to generate the data and when the learned neural network equals the XOR function after the $\approx$-sign.

The value functions for the case that one input is `in-coalition' and the other `out-of-coalition' does depend on the type of Shapley value under consideration. For the marginal Shapley values we get
\begin{eqnarray}
\nu(\{1\}) & = & \expectation\left[f(0,X_2)\right] = \sum_{j} \left(\sum_i \hat{p}_{ij}\right) \hat{f}(0,j) \nonumber \approx {1 \over 2} \\
\nu(\{2\}) & = & \expectation\left[f(X_1,0)\right] = \sum_{i} \left(\sum_j \hat{p}_{ij}\right) \hat{f}(i,0) \approx \frac{1}{2} \: ,
\label{eq:marginal}
\end{eqnarray}
yet for the conditional Shapley values
\begin{eqnarray}
\nu(\{1\}) & = & \expectation\left[f(0,X_2)|X_1=0\right] = \sum_{j} \frac{\hat{p}_{0j}}{\sum_{i} \hat{p}_{ij}} \hat{f}(0,j) \approx \frac{1}{2}(1-\epsilon) \nonumber \\
\nu(\{2\}) & = & \expectation\left[f(X_1,0)|X_2=0\right] = \sum_{i} \frac{\hat{p}_{i0}}{\sum_{j} \hat{p}_{ij}} \hat{f}(i,0) \approx \frac{1}{2}(1-\epsilon) \: .
\label{eq:conditional}
\end{eqnarray}
The value functions for the causal Shapley values depend on the presumed causal model that generates the dependencies. In case the dependencies are assumed to be the result of confounding, we get the value functions in~(\ref{eq:marginal}) as for the marginal Shapley values and when the dependencies are assumed to be the result of mutual interaction the value functions in~(\ref{eq:conditional}) as for the conditional Shapley values. The more interesting case is when we assume a causal chain, e.g., $X_1 \rightarrow X_2$:
\begin{eqnarray}
\nu(\{1\}) & = & \expectation\left[f(0,X_2)|\dodo(X_1=0)\right] = \expectation\left[f(0,X_2)|X_1=0\right] = \sum_{j} \frac{\hat{p}_{0j}}{\sum_{i} \hat{p}_{ij}} \hat{f}(0,j) \approx \frac{1}{2}(1-\epsilon) \nonumber \\
\nu(\{2\}) & = & \expectation\left[f(X_1,0)|\dodo(X_2=0)\right] =\expectation\left[f(X_1,0)\right] = \sum_{i} \left(\sum_j \hat{p}_{ij}\right) \hat{f}(i,0) \approx \frac{1}{2} \: ,
\label{eq:causal}
\end{eqnarray}
and the same with indices 1 and 2 interchanged for the causal chain $X_2 \rightarrow X_1$.

\begin{figure}[t]
	\centering
	\includegraphics[width=\textwidth]{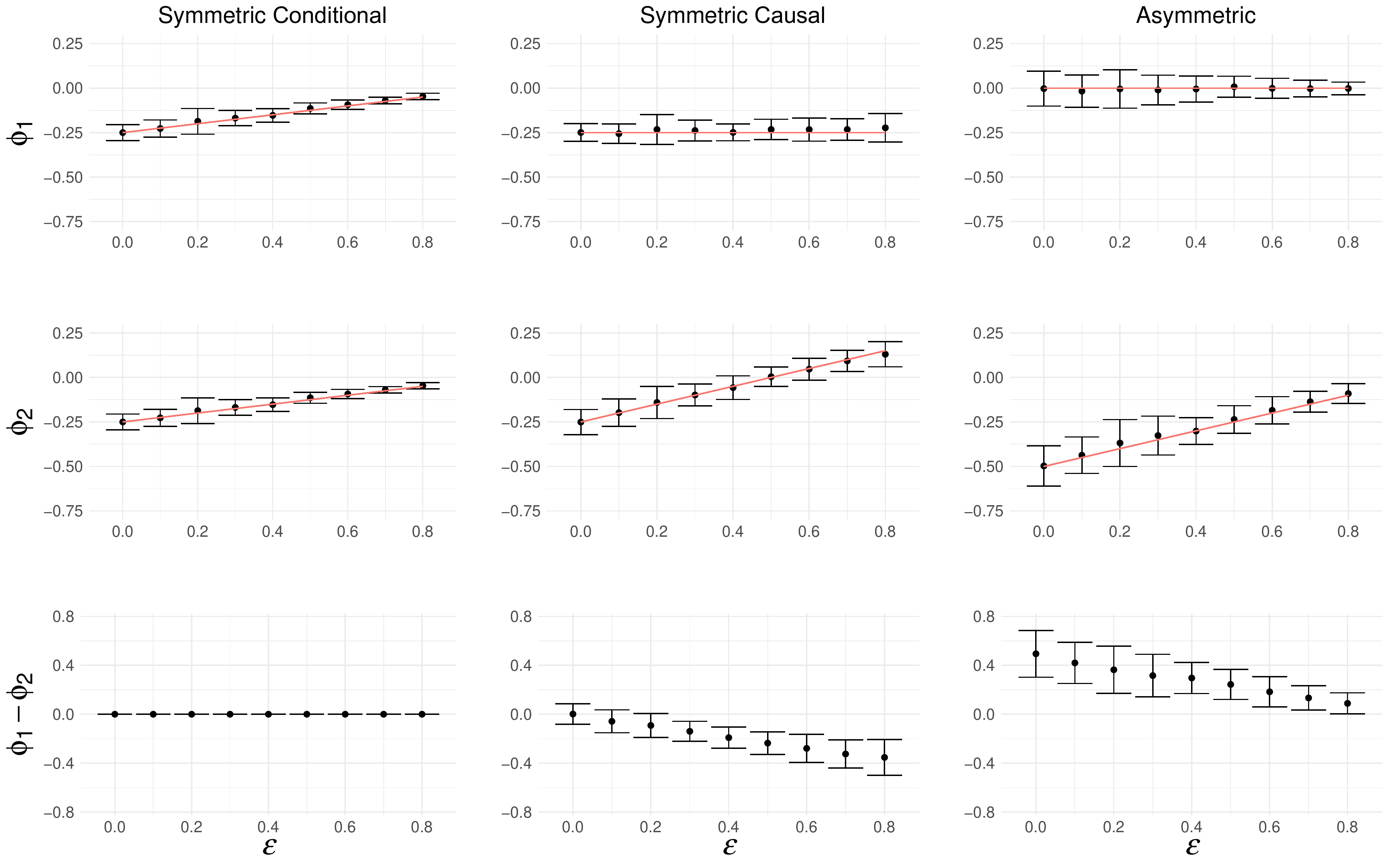}
	\caption{The conditional symmetric, causal symmetric and (causal/conditional) asymmetric Shapley values of data point $(X_1,X_2) = (0,0)$ under the assumption of a causal chain $X_1 \rightarrow X_2$ for different causal strengths $\epsilon$. The bars indicate the mean Shapley value and standard deviation of 100 runs with a neural network trained on 100 data points generated according to a particular causal strength $\epsilon$. Red lines give the theoretical Shapley values, to be expected for an infinite amount of samples and when the neural networks would have perfectly learned to represent the XOR function. The third row gives the difference between the Shapley values for $X_1$ and $X_2$ and clearly shows the discontinuity of asymmetric Shapley values for $\epsilon = 0$.
	}
	\label{fig:xor_plot}
\end{figure}

Given these value functions, we can now compute the various Shapley values. For marginal and symmetric Shapley values we have
\begin{eqnarray}
\phi_1 & = & \frac{1}{2}[\nu(\{1\}) - \nu(\{\})] + \frac{1}{2}[\nu(\{1,2\}) - \nu(\{2\})] \nonumber \\
\phi_2 & = & \frac{1}{2}[\nu(\{2\}) - \nu(\{\})] + \frac{1}{2}[\nu(\{1,2\}) - \nu(\{1\}]) \: , \nonumber
\end{eqnarray}
whereas for asymmetric Shapley values, assuming the causal chain $X_1 \rightarrow X_2$,
\begin{eqnarray}
\phi_1 & = & \nu(\{1\}) - \nu(\{\}) \nonumber \\
\phi_2 & = & \nu(\{1,2\}) - \nu(\{1\}) \: , \nonumber
\end{eqnarray}
and the same with indices 1 and 2 interchanged for the causal chain $X_2 \rightarrow X_1$.

With the expressions above, we can compute the various Shapley values based on a learned neural network and the actual frequencies of the generated feature combinations and compare those with the theoretical values obtained when the estimated frequencies equal the probabilities used to generate the data and the neural network indeed managed to learn the XOR function. For the latter we distinguish the following cases.
\begin{description}
	\item[identical:] $\phi_1 = \phi_2 \approx \frac{1}{4} \epsilon - \frac{1}{4}$. This applies to marginal, symmetric conditional, symmetric causal assuming confounding, symmetric causal assuming mutual interaction.
	\item[symmetric causal:] $\phi_1 \approx -\frac{1}{4}$ and $\phi_2 \approx \frac{1}{2}\epsilon - \frac{1}{4}$ assuming the causal chain $X_1 \rightarrow X_2$ and vice versa for $X_1 \rightarrow X_2$.
	\item[asymmetric:] $\phi_1 \approx 0$ and $\phi_2 \approx \frac{1}{2}\epsilon - \frac{1}{2}$ assuming the causal chain $X_1 \rightarrow X_2$ and vice versa for $X_1 \rightarrow X_2$. These apply both to asymmetric conditional and asymmetric causal.
\end{description}

In this example, symmetric causal Shapley values are clearly to be preferred over asymmetric causal Shapley values for small causal strengths. Inserting a causal link with zero strength ($\epsilon = 0$), asymmetric Shapley values jump from the symmetric $\phi_1 = \phi_2 \approx -\frac{1}{4}$ to the completely asymmetric $\phi_1 \approx 0$ and $\phi_2 \approx - \frac{1}{2}$, assigning all credit to the second feature, even though the first feature in reality does not affect the second feature at all. Symmetric Shapley values, on the other hand, are insensitive to the insertion of a causal link with zero strength: in the limit $\epsilon \rightarrow 0$ symmetric causal Shapley values correctly converge to marginal Shapley values.

Figure~\ref{fig:xor_plot} shows the results of a series of simulations, computing different Shapley values for trained neural networks and comparing these to the theoretical values. The discontinuity of asymmetric Shapley values (conditional and causal asymmetric Shapley values are identical in this example) is most clearly seen in the third row, showing the difference between the Shapley values for $X_1$ and $X_2$. Symmetric conditional Shapley values do not distinguish between the Shapley values for $X_1$ and $X_2$ for any causal strength $\epsilon$, whereas the symmetric causal Shapley values are identical for $\epsilon = 0$ and then slowly start to deviate for larger values of $\epsilon$.

\section{Shapley values for predicting dementia}

We consider the Alzheimer's disease data set obtained from the Alzheimer's Disease Neuroimaging Initiative (ADNI) database (\url{http://adni.loni.usc.edu}). The primary goal of ADNI has been to test whether serial magnetic resonance imaging (MRI), positron emission tomography (PET), other biological markers, and clinical and neuropsychological assessment can be combined to measure the progression of mild cognitive impairment and early Alzheimer's disease. As features we consider age ({\em age}), gender ({\em gender}), education level ({\em pteducat}), fudeoxyglucose ({\em FDG}), amyloid beta ({\em ABETA}), phosphorylated
tau ({\em PTAU}), and the number of apolipoprotein alleles ({\em APOE4}). Data was normalised and randomly split in 80\% training and 20\% test set. We consider a binary classification problem, where we grouped together patients with mild cognitive impairment and early Alzheimer's disease, to distinguish these from the healthy ``cognitive normal'' subjects. We trained a multi-layered perceptron with five hidden units.

To compute the causal Shapley values, we chose the partial order (\{{\em gender}, {\em APOE4}, {\em age}, {\em pteducat}\}, \{{\em ABETA}\}, \{{\em FDG}, {\em PTAU}\}), in line with the ``gold standard'' causal graph from~\cite{shen2020challenges}. We assume confounding in the first and third component. Since interventional expectations over variables in the first component simplify to marginal expectations, we can sample the discrete variables {\em gender} and {\em APOE4} from their empirical distributions. All other variables are sampled from conditional Gaussian distributions.

\begin{figure}[t]
	\includegraphics[width=\textwidth]{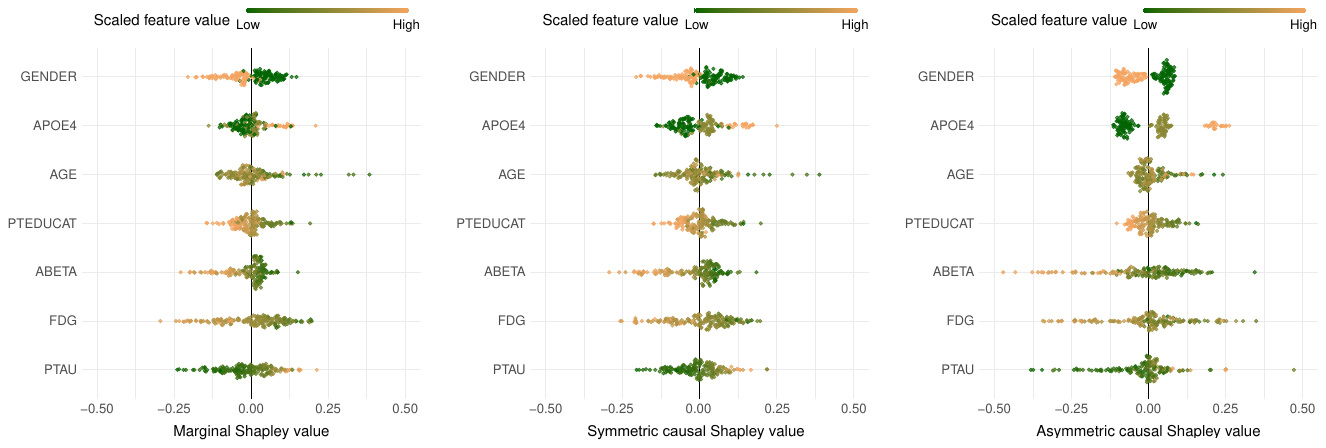}
	\caption{Sina plots of marginal (left), symmetric causal (middle), and asymmetric causal Shapley values (right) for a multi-layered perceptron trained on the ADNI dataset. See the text for further explanations.}
	\label{fig:sina_adni}
\end{figure}

The sina plots in Figure~\ref{fig:sina_adni} show the marginal, symmetric causal, and asymmetric causal Shapley values, respectively, for the predicted probability of having dementia or mild cognitive impairment. As the dependencies among the features are relatively weak, the marginal and symmetric causal Shapley values are quite similar. The asymmetric causal Shapley values for different values of {\em APOE4}, a known risk factor of {\em ABETA}, are more clearly separated than those for the marginal and symmetric causal Shapley values. The asymmetric Shapley values provide an explanation with a relatively strong focus on the root cause {\em APOE4}, which first gets all the credit for the indirect effect through {\em ABETA}.

\begin{ack}
ADNI data collection and sharing was funded by the Alzheimer's Disease
Neuroimaging Initiative (ADNI) (National Institutes of Health Grant U01 AG024904) and DOD ADNI (Department of Defense award number W81XWH-12-2-0012). See \url{http://adni.loni.usc.edu/wp-content/themes/freshnews-dev-v2/documents/policy/ADNI_Data_Use_Agreement.pdf}, section 12, for further contributions and \url{http://adni.loni.usc.edu/wp-content/uploads/how_to_apply/ADNI_Acknowledgement_List.pdf} for a complete listing of ADNI investigators.
\end{ack}

\bibliography{../shapleyrefs}
\bibliographystyle{plain}